\documentclass{article}
\usepackage{fullpage}

\usepackage[title]{appendix}

\usepackage{algpseudocode}

\usepackage{diagbox}
\usepackage[utf8]{inputenc} 
\usepackage[T1]{fontenc}    
\usepackage{url}            
\usepackage{multirow}
\usepackage{booktabs}       
\usepackage{amsfonts,amssymb,amsthm}       
\usepackage{nicefrac}       
\usepackage{microtype}      
\usepackage{hhline}
\usepackage{graphicx}
\usepackage{multirow}
\usepackage{tabularray}
\usepackage[authoryear]{natbib}

\usepackage{indentfirst}
\usepackage{mathtools}
\usepackage{amsmath}
\usepackage{amssymb}
\usepackage{mathtools}
\usepackage{appendix}
\usepackage{subfigure} 

\usepackage[utf8]{inputenc} 
\usepackage[T1]{fontenc}    
\usepackage{hyperref}       
\usepackage{url}            
\usepackage{booktabs}       
\usepackage{amsfonts}       
\usepackage{nicefrac}       
\usepackage{microtype}      
\usepackage{xcolor}         

\usepackage{afterpage}

\usepackage{amsfonts,enumitem,amsmath,amsthm}       
\usepackage{nicefrac}       
\usepackage{microtype}      
\usepackage{xcolor,appendix}         
\usepackage{float}          
\usepackage{graphicx}
\usepackage{subcaption}
\usepackage{amssymb} 
\usepackage[linesnumbered,ruled,vlined]{algorithm2e}

\newtheorem{proposition}{Proposition}

\usepackage{algpseudocode}

\newtheorem{theorem}{Theorem}
\newtheorem{remark}{Remark}
\newtheorem{lemma}{Lemma}

\theoremstyle{plain}
\theoremstyle{definition}

\usepackage{amsmath,amsfonts,bm}

\def\eqref#1{equation~\ref{#1}}

\def\1{\bm{1}}
\def\0{\bm{0}}

\DeclareMathAlphabet{\mathsfit}{\encodingdefault}{\sfdefault}{m}{sl}
\SetMathAlphabet{\mathsfit}{bold}{\encodingdefault}{\sfdefault}{bx}{n}

\usepackage{inputenc}
\usepackage{fontenc}
\usepackage{indentfirst}
\usepackage{cite}
\usepackage{hyperref}
\usepackage{amssymb}
\usepackage{float}
\usepackage{amsmath}
\usepackage{graphicx}
\usepackage{amsthm}

\title{\textbf{Exposing Diversity Bias in Deep Generative Models: \\ Statistical Origins and Correction of Diversity Error}}

\date{}

\renewcommand{\cite}{\citep}

\author{
Farzan Farnia\thanks{Department of Computer Science and Engineering, 
The Chinese University of Hong Kong. 
\texttt{\{farnia, mjalali24, aospanov9\}@cse.cuhk.edu.hk}}
\thanks{Authors are listed in alphabetical order.}
\and
Mohammad Jalali\footnotemark[1]
\and
Azim Ospanov\footnotemark[1]
}

\begin{document}
\maketitle

\begin{abstract}
    Deep generative models have achieved great success in producing high-quality samples, making them a central tool across machine learning applications. Beyond sample quality, an important yet less systematically studied question is whether trained generative models faithfully capture the diversity of the underlying data distribution. In this work, we address this question by directly comparing the diversity of samples generated by state-of-the-art models with that of test samples drawn from the target data distribution, using recently proposed reference-free entropy-based diversity scores, Vendi and RKE. Across multiple benchmark datasets, we find that test data consistently attains substantially higher Vendi and RKE diversity scores than the generated samples, suggesting a systematic downward diversity bias in modern generative models. To understand the origin of this bias, we analyze the finite-sample behavior of entropy-based diversity scores and show that their expected values increase with sample size, implying that diversity estimated from finite training sets could inherently underestimate the diversity of the true distribution. As a result, optimizing the generators to minimize divergence to empirical data distributions would induce a loss of diversity. Finally, we discuss potential diversity-aware regularization and guidance strategies based on Vendi and RKE as principled directions for mitigating this bias, and provide empirical evidence suggesting their potential to improve the results.
\end{abstract}

\section{Introduction}
Deep generative models have achieved substantial progress in synthesizing complex, high-dimensional data across vision, language, and audio modalities. Early generative modeling approaches including variational autoencoders (VAEs)~\citep{kingma2013vae} and generative adversarial networks (GANs)~\citep{goodfellow2014gan} established scalable paradigms for likelihood-based and divergence minimization-based data generation, while recently diffusion and score-based models~\citep{song2019generative,ho2020ddpm,song2021sde} further improved training stability by formulating generation as iterative denoising. From a statistical perspective, these models aim to learn a neural net-based sample generator that produces fresh draws approximating the data-generating distribution, without explicitly restricting the distribution to a fixed parametric family as in classical statistics.

Evaluating how well a generative model approximates the target distribution is typically addressed by standard quantitative metrics in the literature. In image generation, the Fréchet Distance (FD)~\citep{heusel2017fid,kynkaanniemi_role_2022,stein_exposing_2023} compares mean and covariance statistics with a reference data distribution, which occurs after embedding samples using pretrained feature extractors such as Inception~\citep{szegedy2016inception}, CLIP~\citep{radford_learning_2021}, and DINOv2~\citep{oquab_dinov2_2023}. The Kernel Inception Distance (KID)~\citep{binkowski2018kid} relies instead on the maximum mean discrepancy (MMD)~\citep{gretton2012mmd} measured in the same embedding spaces. These metrics are widely used for benchmarking and model comparison, but they are not designed to reveal systematic effects that arise from finite-sample training of the generative models or to isolate structural biases in learned generative distributions.

\begin{figure}[t]
    \centering
    \includegraphics[width=0.99\linewidth]{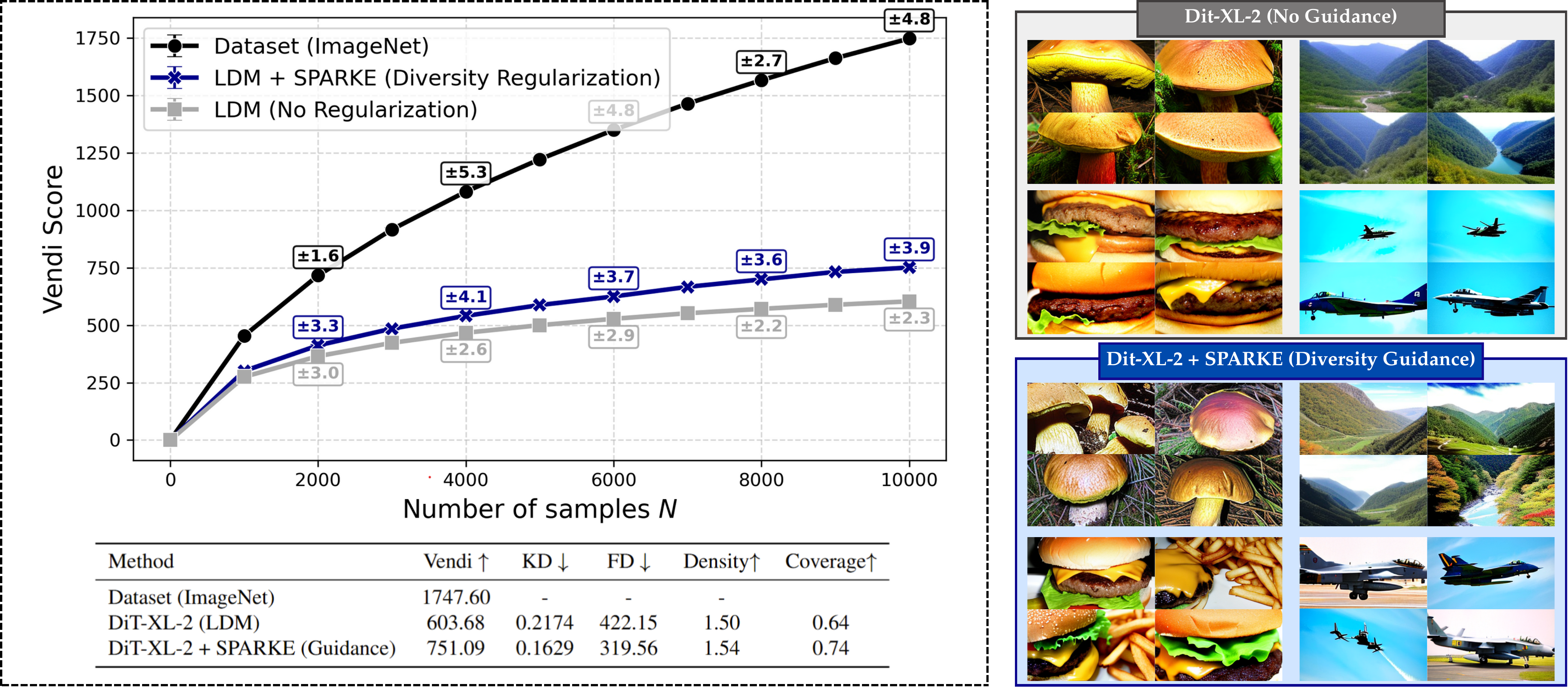}
    \caption{Numerical evaluation of Vendi scores at different sample sizes (averaged over 100 independent random trials with confidence interval) for the validation set of ImageNet, the pre-trained latent diffusion model (LDM) of DiT-XL-2, and the SPRAKE-guided LDM with entropy diversity regularization proposed in \citep{jalali2025sparke}. The entropy regularization in the SPARKE Guidance not only improves the Vendi and Coverage diversity scores, but further improves the FD and KD overall evaluation scores.}\vspace{-2mm}
    \label{fig:intro_fig}
\end{figure}

In this work, we focus on  \textit{diversity} as one such structural aspect that could be systematically biased in the trained models. Specifically, we study whether standard deep generative models can capture the diversity of the underlying data distribution, or whether they exhibit a systematic shortfall. To do so, we adopt reference-free entropy-based diversity measures, including the Vendi score~\citep{friedman2023vendi} and the Rényi Kernel Entropy (RKE) score~\citep{jalali2023rke}, which quantify diversity through spectral entropy of kernel similarity matrices computed from samples. Using these measures as well as the reference-based diversity scores Recall \citep{sajjadi_assessing_2018,kynkaanniemi_improved_2019} and Coverage \citep{naeem2020reliable}, we evaluate state-of-the-art generative models on standard benchmarks such as ImageNet~\citep{deng2009imagenet}, MS-COCO~\citep{lin2014coco}, and FFHQ~\citep{karras2019style}. Across all settings, we observe that generated samples consistently exhibit lower entropy-based diversity than the real datasets, suggesting a universal downward diversity bias.

To understand potential reasons for this bias, we specifically examine the finite size of the training dataset for generative models and whether their empirical distributions capture the full diversity of the underlying population distribution. For basic discrete random variables, the underestimation of entropy from finite empirical samples is well-known in classical statistics~\citep{miller1955bias,basharin1959entropy}. It is plausible that similar effects occur for kernel-entropy Vendi diversity measures of finite training datasets. To investigate this, we measure the Vendi score on progressively larger subsamples of real datasets. Within the range where exact spectral computation of Vendi score is feasible, we observe a pronounced increase in measured diversity with sample size, suggesting that common dataset sizes may still lie in a regime where diversity estimates have not stabilized.

This empirical behavior connects to the classical theory of entropy estimation under finite sampling. Bias analyses for plug-in entropy estimators~\citep{miller1955bias} and general nonparametric treatments of entropy estimation~\citep{paninski2003entropy} show that empirical procedures can systematically underestimate population-level entropy. While the Vendi score in \citep{friedman2023vendi} is the matrix-based spectral entropy rather than standard Shannon entropy, it still depends on a finite-sample empirical kernel operator. Building on this perspective, we prove that the expected logarithm of the Vendi score of an i.i.d. sample set is monotonically increasing with the sample size. This formalizes the intuition that finite datasets could under-represent population diversity, and it suggests one possible route by which a diversity shortfall can arise before a generative model is trained.

Beyond diagnosing the bias, understanding downward diversity bias has practical implications for model training and inference. Our theoretical results suggest that projecting a learned distribution onto suitable Vendi or RKE-based super-level sets can reduce kernel discrepancies measured by MMD when the learned model exhibits a diversity shortfall relative to the population distribution. This motivates entropy-based diversity regularization during training or fine-tuning of the generative models. This also provides a theoretical justification for diversity-aware guidance mechanisms, including recently proposed Vendi~Guidance~\citep{hemmat2024cvsg} and RKE~Guidance~\citep{jalali2025sparke}, at inference time for diffusion models.

\vspace{-3mm}
\section{Related Works}
\paragraph{Diversity and evaluation of generative models.}
Evaluation of deep generative models has been extensively studied through sample-based metrics, including the Inception Score \citep{salimans2016improved} and embedding-based distances such as FID \citep{heusel2017fid} and KID \citep{binkowski2018kid}. To separate fidelity from coverage, precision--recall-style metrics were proposed by \citet{sajjadi_assessing_2018} and refined by improved precision/recall estimators based on manifold estimation \citep{kynkaanniemi_improved_2019}, with related density and coverage scores by \citet{naeem2020reliable}. The metrics have been further updated by employing stronger backbone embeddings, including CLIP \citep{kynkaanniemi_role_2022} and DINOv2 \citep{stein_exposing_2023}. A complementary perspective on diversity limitations is provided by the birthday-paradox-based test of \citet{arora2018gans}, which estimates the effective support size of a model’s distribution, suggesting that (at the time) GAN samples could output relatively small support sets.  

\paragraph{Reference-free diversity metrics and kernel-entropy scores.}
Reference-free diversity measures quantify variability within a set of generated samples without requiring an explicit reference dataset. The Vendi score \citep{friedman2023vendi} defines diversity as the exponential of the von Neumann entropy of a kernel similarity matrix, and Rényi Kernel Entropy (RKE) \citep{jalali2023rke} provides an order-2 Rényi analogue with tractable structure and mode-count interpretations in certain settings. These kernel-entropy ideas have also been used to steer generation: contextualized Vendi score guidance (VSG) \citep{hemmat2024cvsg} injects Vendi-based gradients into latent diffusion sampling to encourage diversity (under the chosen kernel/representation) while controlling drift. Related extensions incorporate quality into diversity scoring, e.g., quality-weighted Vendi \citep{nguyen2024qwvendi}, and recent work proposes gradient-space entropy measures such as G-Vendi \citep{jung2025prismatic}, linking diversity of data to generalization in downstream reasoning tasks.

\paragraph{Novelty, memorization, rarity, and fine-grained diagnostics.}
Beyond diversity, several works target novelty and memorization behavior of generative models. \citet{alaa2022faithful} propose sample-level auditing via the Authenticity score.  \citet{jiralerspong2023fld} propose Feature Likelihood Divergence, a parametric feature-space likelihood approach to jointly evaluate novelty, fidelity, and diversity from samples. \citet{han2022rarity} propose the Rarity score to quantify per-sample uncommonness. Also, \citet{zhang2024ken} propose the Kernel-based Entropic Novelty score for identifying sample types generated more frequently.

\paragraph{Diversity in diffusion models.} Recent work has examined inference-time mechanisms for improving diversity in diffusion models. \citet{sadat2024cads} propose CADS which injects scheduled noise into the conditioning signal during inference to trade off conditioning strength and diversity. \citet{corso2024particle} develop Particle Guidance for non-i.i.d. diversity-aware sampling, guiding a set of simultaneously generated samples to increase diversity via particle-style interactions. Also, Vendi and RKE-based guidance by \citet{hemmat2024cvsg,jalali2025sparke} use matrix-based entropy as an explicit diversity objective. Our work examines the role of entropy-guided methods in addressing diversity biases in generative models.

\paragraph{Entropy, kernel operators, and bias in entropy estimation.}
Kernel/operator entropy functionals connect generative-model diversity evaluation to information theory and quantum information, where von Neumann entropy is defined on positive semidefinite operators. In machine learning, matrix-based and kernel-based entropy estimators were developed to avoid explicit density estimation, including kernel constructions motivated by Rényi-style axioms \citep{sanchezgiraldo2012entropy} and RKHS information-theoretic frameworks based on covariance operators \citep{bach2022itkernel}. Separately, classical statistics shows that plug-in entropy estimation from finite samples can exhibit systematic downward bias, starting from early bias analyses \citep{miller1955bias} and nonparametric treatments \citep{paninski2003entropy}, with modern minimax-optimal estimation of distribution functionals including entropy \citep{jiao2015minimax} and improved estimators for the unseen regime \citep{valiant2013unseen}. These results inspire the study of finite-sample effects when diversity is quantified through entropy-like functionals.

\section{Preliminaries}
Consider a data random variable $X\in \mathcal{X}$ distributed according to $P_X$ over the sample space $\mathcal{X}$.
Then, a trained generative model $\mathcal{G}$ induces a probability distribution $P_\mathcal{G}$ on variable $X$. In a reference-free evaluation process, the evaluator has only access to $n$ independently generated samples from $P_\mathcal{G}$, denoted by $x_1,\ldots,x_n \in \mathcal{X}$. 
In our analysis, we utilize entropy-based metrics that quantify the diversity of the drawn sample set $x_1,\ldots,x_n$ in a reference-free manner without requiring access to a reference distribution. In our evaluation of image generative models, we use DINOv2-embedded data following the reference \citep{stein_exposing_2023}.

\subsection{Kernel Functions and Kernel Matrices}
Following the standard definition, a function $k:\mathcal{X}\times\mathcal{X}\rightarrow\mathbb{R}$ is called a kernel if for every $n\in\mathbb{N}$ and inputs $x_1,\ldots,x_n\in\mathcal{X}$, the corresponding Gram matrix $K\in\mathbb{R}^{n\times n}$ with entries $K_{ij}=k(x_i,x_j)$ is positive semidefinite (PSD).
By the classical characterization due to \citet{aronszajn1950reproducing}, there exists a (possibly infinite-dimensional) reproducing kernel Hilbert space $\mathcal{H}$ and a feature map $\phi:\mathcal{X}\rightarrow\mathcal{H}$ such that
\begin{equation}\label{eq:kernel_feature_map}
    k(x,x') = \bigl\langle \phi(x), \phi(x') \bigr\rangle_{\mathcal{H}}.
\end{equation}

In this work, we focus on \textit{normalized kernels} satisfying $k(x,x)=1$ for all $x\in\mathcal{X}$.
Common examples include cosine similarity applied to unit-normalized embeddings and the Gaussian (RBF) kernel.
For a normalized kernel, the Gram matrix satisfies $\mathrm{Tr}(K)=\sum_{i=1}^n k(x_i,x_i)=n$, and therefore the normalized kernel matrix $\frac{1}{n}K$ has unit trace.
Since $\frac{1}{n}K$ is PSD, its eigenvalues are nonnegative and sum to one, i.e. they form a probability model.

\subsection{Kernel Covariance and Population Operator}
Let $\Phi\in\mathbb{R}^{n\times d}$ denote the feature matrix whose $i$-th row is $\phi(x_i)^\top$ in the finite-dimensional case (and the same identities hold at the operator level in general).
Then the normalized kernel matrix can be written as
\begin{equation}
    \frac{1}{n}K = \frac{1}{n}\Phi\Phi^\top.
\end{equation}
Define the empirical kernel covariance matrix $\widehat{C}_X\in\mathbb{R}^{d\times d}$:
\begin{equation}
    \widehat{C}_X := \frac{1}{n}\sum_{i=1}^n \phi(x_i)\phi(x_i)^\top = \frac{1}{n}\Phi^\top\Phi.
\end{equation}
The matrices $\frac{1}{n}\Phi\Phi^\top$ and $\widehat{C}_X$ share the same non-zero eigenvalues (singular values of $\frac{1}{\sqrt{n}}\Phi$), and thus any spectral functional applied to $\frac{1}{n}K$ can equivalently be applied to $\widehat{C}_X$.

At the population level, we define the kernel covariance operator
\begin{equation}
    \widetilde{C}_X := \mathbb{E}_{X\sim P_X}\Bigl[\phi(X)\phi(X)^\top\Bigr].
\end{equation}
Under the normalized kernel assumption, $\widetilde{C}_X$ is positive semidefinite and satisfies $\mathrm{Tr}(\widetilde{C}_X)=\mathbb{E}[k(X,X)]=1$, thus it is a density matrix whose eigenvalues form a valid probability model.

\subsection{Matrix-based Entropy: Vendi and RKE Diversity Scores}
Let $A\in\mathbb{R}^{d\times d}$ be a PSD matrix with unit trace $\mathrm{Tr}(A)=1$, and let $\{\lambda_i\}_{i=1}^d$ denote its eigenvalues.
The \textit{von Neumann entropy} of $A$ is defined as
\begin{equation}\label{eq:von_neumann_entropy}
    H(A) := \sum_{i=1}^d \lambda_i \log\frac{1}{\lambda_i}
\end{equation}

Given samples $x_1,\ldots,x_n$ and their normalized kernel matrix $\frac{1}{n}K$, \citet{friedman2023vendi} define the \textit{Vendi score} as the exponential of the von Neumann entropy,
\begin{equation}\label{eq:vendi_score}
    \mathrm{Vendi}(x_1,..,x_n)\! :=\! \exp\bigl(H\bigl(\frac{1}{n}K\bigr)\bigr) =\exp\bigl(H(\widehat{C}_X)\bigr)
\end{equation}
Following the analysis in \citep{bach2022itkernel,ospanov2025vendi}, the population Vendi score of a distribution $P_X$ is defined analogously by applying the same construction to the population kernel covariance operator $\widetilde{C}_X$,
\begin{equation}
    \mathrm{Vendi}(P_X) := \exp\bigl(H(\widetilde{C}_X)\bigr).
\end{equation}
\begin{remark}
As discussed in \citep{friedman2023vendi,ospanov_fkea_2024}, the \emph{exact} computation of the Vendi score with a Gaussian kernel (and more generally infinite-dimensional kernels) requires an eigendecomposition of the $n\times n$ kernel matrix, which becomes computationally prohibitively expensive for dataset sizes larger than $\approx 20000$. Therefore, our Vendi score evaluations remain constrained to 20000 samples,  and we report the Vendi score plots at sample sizes below this threshold. For image data, we use DINOv2 backbone embedding following \citep{stein_exposing_2023}. 

Also, as we prove in the Appendix (Proposition~\ref{prop:vendi_concentration_single_eval_app}), the Vendi score $\mathrm{Vendi}(x_1,\ldots ,x_n)$ of an instance of $n$ i.i.d. samples $x_1,\ldots ,x_n\stackrel{\text{iid}}{\sim} P_X$ concentrates around the expected value $\mathbb{E}[\mathrm{Vendi}(x_1,\ldots ,x_n)]$ where the expectation is over the randomness of $n$ i.i.d. samples from $P_X$ (this is different from the Vendi score of the population distribution $\mathrm{Vendi}(P_X)$). To further ensure our reported scores are properties of the underlying distribution, we perform the Vendi score evaluation for $M=10$ independent trials and report the averaged score and its confidence interval.  
\end{remark}
Also, note that the \textit{RKE score} \citep{jalali2023rke} corresponds to order-$2$ Rényi entropy of the kernel matrix, which for  samples $x_1,\ldots,x_n$ and their kernel matrix $K$  will be ($\Vert\cdot \Vert_F$ denotes the Frobenius norm):
\begin{equation}\label{eq:rke_score}
    \mathrm{RKE}(x_1,\ldots,x_n) := \Bigl\Vert\frac{1}{n}K\Bigr\Vert_F^{-2}.
\end{equation}
As discussed in \citep{jalali2023rke}, the population RKE admits a simple characterization: if $X,X'$ are i.i.d.\ samples from $P_X$, then
$\mathrm{RKE}(P_X) = {1}/{\mathbb{E}\bigl[k^2(X,X')\bigr]}$. This identity follows from interpreting the order-$2$ kernel entropy through the Hilbert--Schmidt norm of the population kernel covariance operator.

\section{Finite-Sample Entropy Bias and its Effects on Diversity in Generative Models}
\label{sec:entropy_bias_to_vendi}

\begin{figure}
    \centering
    \includegraphics[width=\linewidth]{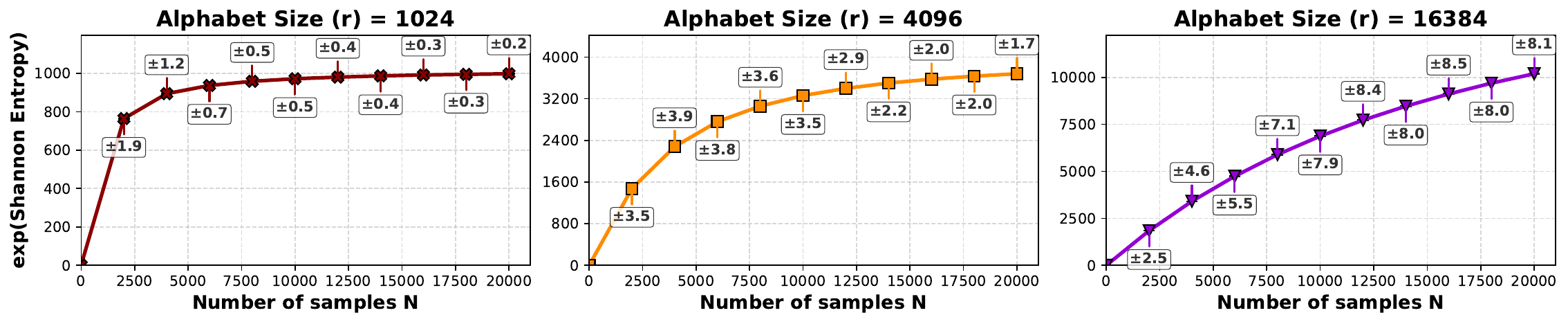}
    \caption{Exponential of Shannon entropy vs. sample size $N$ for varying alphabet sizes $R$. Error bars represent 95\% confidence intervals calculated over 10 independent trials.}
    \label{fig:entropy bias}
\end{figure}

\begin{figure}[h]
    \centering
    \includegraphics[width=\textwidth]{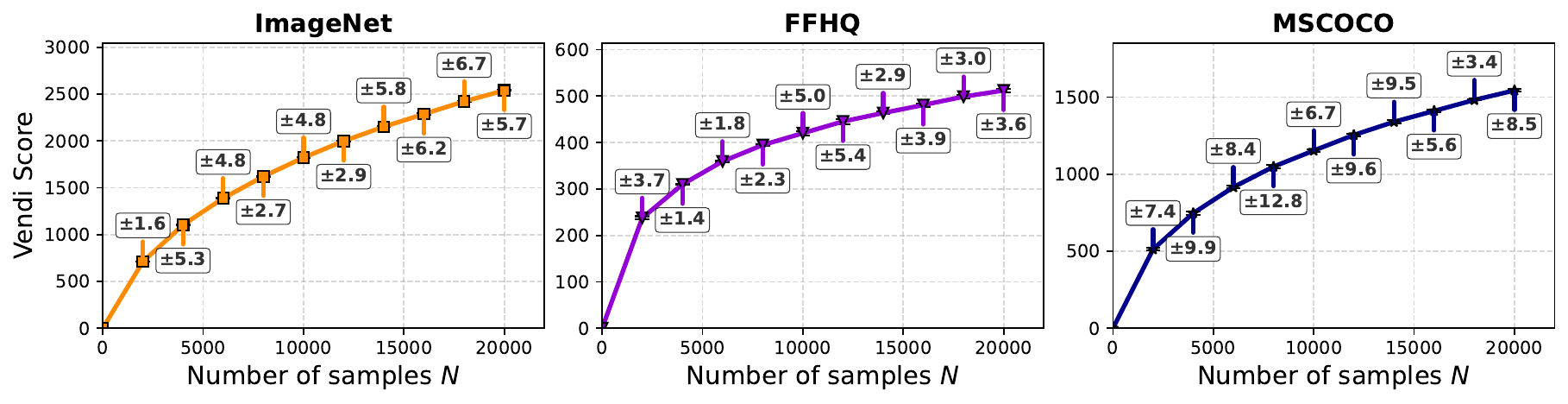}
    \caption{Vendi score curves (mean and confidence intervals over 10 independent sample sets) for ImageNet, FFHQ, and MSCOCO (sample size values of $n\le 20\text{K}$), computed using DINOv2 embeddings and a Gaussian (RBF) kernel with bandwidth $\sigma=35$. The Vendi scores continue to increase at a significant rate across all sample sizes below the 20000-computational-feasible size limit for exact Vendi computation.}
    \label{fig:ci-dinov2-sigma35}
\end{figure}

A central question in this work is how reliably entropy-based diversity scores of finite samples $x_1,\ldots, x_n$
reflect the true entropy score of the underlying data distribution $P_X$.
This question is particularly relevant for generative modeling, where the generative model is trained on finite datasets.
If the expected entropy score of the empirical distribution $\widehat{P}_n$ of $n$ i.i.d.\ samples from $P$ exhibits a noticeable downward bias relative to the entropy of population distribution $P$ (entropy of infinite samples from $P$), then the finite training set would expose the model to less diversity than the true diversity of $P$.  
Therefore, such an entropy-based diversity gap between the empirical distribution of training data $\widehat{P}_n$ and the underlying $P$ can plausibly propagate to trained generative models on the empirical samples.

In this section, we aim to analyze this phenomenon from a statistical perspective. We begin by reviewing the classical downward bias of the Shannon entropy of finite sample sets for basic discrete random variables.
We then extend the same line of reasoning to the von Neumann entropy of the empirical kernel covariance operator, which underlies the Vendi score.
Two controlled experiments, one discrete and one continuous, are used to illustrate how entropy-based scores increase with the evaluation sample size $n$ and exhibit systematic downward bias at moderate $n$.
These observations provide the conceptual foundation for the next section, where we empirically study diversity bias in pretrained generative models.

\subsection{Classical background: downward bias of finite-sample Shannon entropy}
\label{subsec:plugin_shannon_bias}

Let $P$ be a discrete distribution on an alphabet of size $r$, and let $\widehat P_n$ denote the empirical distribution formed by $n$ i.i.d.\ samples from $P$.
The plug-in (maximum-likelihood) estimator of Shannon entropy is $H(\widehat P_n)$, i.e., the Shannon entropy of the $n$ empirical samples from $P$.
It is well-known in statistics and information theory that this estimator is \textit{downward biased}, as
\begin{equation}\label{eq:plugin_entropy_bias}
    \mathbb{E}\bigl[H(\widehat P_n)\bigr] \:\le\: H(P).
\end{equation}
Note that the above inequality follows directly from the well-known concavity of Shannon entropy and Jensen's inequality as
$\mathbb{E}[\widehat P_n]=P$.
Beyond the negative sign of this bias, sharp asymptotic expansions are already known.
In particular, for fixed $r$ and large $n$, the leading bias term scales as $\frac{r-1}{2n}$ (in nats), i.e.
\begin{equation}
    H(P) - H(\widehat P_n)\; \gtrsim\; \frac{r-1}{2n},
\end{equation}
as shown by \citet{miller1955bias} and \citet{basharin1959entropy} and refined in
\citep{paninski2003entropy,schurmann2004bias}.

\paragraph{Experiment 1: discrete entropy and alphabet-size effects.}
To visualize this phenomenon, we consider a uniform random variable on $\{1,2,\ldots,r\}$ for
$r\in\{2^{10},2^{12}, 2^{14}\}$.
Note that the entropy of the population distribution (uniform) is well-known to be $H(P)=\log r$.
For each $r$, we repeat the following experiment $M=10$ times and present the averaged result and their confidence intervals:
we sample an i.i.d.\ dataset of size $n$ until 20000,   
compute the plug-in entropy $H(\widehat P_n)$, and average the results across $M$ trials.
To facilitate comparison across different values of $r$, we exponentiate the entropies and plot
$\exp(H(\widehat P_n))$.
The resulting curves in Figure~\ref{fig:entropy bias} exhibit an increasing trend with $n$ and a downward gap at moderate sample sizes, with the gap becoming more pronounced as $r$ increases.

\subsection{From Shannon entropy to log-Vendi: a kernel-based analogue}
\label{subsec:logvendi_bias}

\begin{figure}
    \centering
    \includegraphics[width=\linewidth]{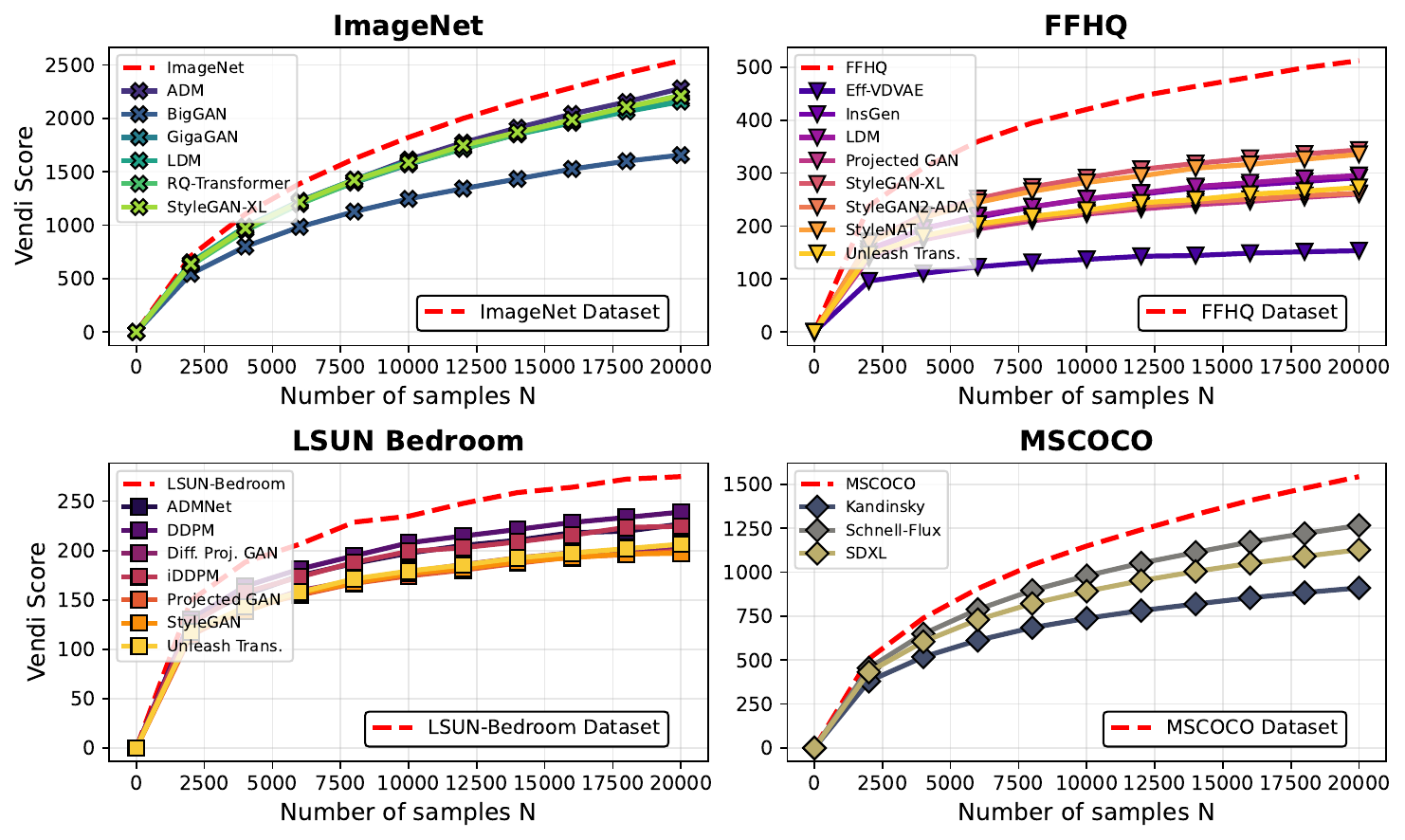}
    \caption{Comparison of Vendi scores of the test sample set (the dashed red curve) and the generated samples by pre-trained generative models across four datasets. The backbone embedding is DINOv2 embeddings using Gaussian (RBF) kernel with bandwidth $\sigma=35$.}
    \label{fig:models-comparison-2x2}\vspace{-3mm}
\end{figure}

We now extend the above intuition to the matrix-based entropy underlying the Vendi score.
Recall the normalized kernel feature map $\phi:\mathcal{X}\to\mathcal{H}$ satisfying
$k(x,x')=\langle \phi(x),\phi(x')\rangle_{\mathcal{H}}$ and $\|\phi(x)\|_{\mathcal{H}}^2=k(x,x)=1$.
Given i.i.d.\ samples $X_1,\ldots,X_n\sim P$, define the empirical kernel covariance operator
\begin{equation}\label{eq:empirical_cov_operator}
    \widehat{C}_n := \frac{1}{n}\sum_{i=1}^n \phi(X_i)\phi(X_i)^\top,
\end{equation}
which is positive semidefinite and has unit trace.
The von Neumann entropy $H(\widehat{C}_n) = -\mathrm{Tr}\big(\widehat{C}_n \log \widehat{C}_n\big)$ 
equals the logarithm of the Vendi score,
$\mathrm{Vendi}(X_{1:n})=\exp(H(\widehat{C}_n))$.
We therefore refer to $H(\widehat{C}_n)$ as \textit{log-Vendi}.

\begin{proposition}[Monotone increase of expected log-Vendi]
\label{prop:monotone_logvendi}
For i.i.d.\ samples $X_1,\ldots,X_n\stackrel{\text{iid}}{\sim} P$, the sequence $ \mathbb{E}\bigl[H(\widehat{C}_n)\bigr] = \mathbb{E}\bigl[\log\bigl(\mathrm{ Vendi}(X_1,\ldots ,X_n)\bigr)\bigr] $
increases monotonically in sample size $n$, i.e.,
\begin{equation*}
    \forall m,n\in\mathbb{N} :\quad  \; m\le n\;\; \Longrightarrow \;\; \mathbb{E}\bigl[H(\widehat{C}_m)\bigr] \le \mathbb{E}\bigl[H(\widehat{C}_n)\bigr] 
\end{equation*}
Note that the expectation is with respect to the randomness of $n$ i.i.d. drawn samples from $P$.
\end{proposition}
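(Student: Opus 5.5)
Our plan is the standard leave-one-out (exchangeability) argument combined with the concavity of von Neumann entropy. It suffices to prove the one-step inequality $\mathbb{E}[H(\widehat{C}_{n-1})]\le \mathbb{E}[H(\widehat{C}_n)]$ for every $n\ge 2$; the general statement for $m\le n$ then follows by chaining these inequalities.

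\textbf{Step 1 (leave-one-out decomposition).} For the sample $X_1,\ldots,X_n$, let $\widehat{C}_{n}^{(-j)} := \frac{1}{n-1}\sum_{i\neq j}\phi(X_i)\phi(X_i)^\top$ denote the empirical kernel covariance operator with the $j$-th sample removed. A direct computation gives the exact identity
\begin{equation*}
    \widehat{C}_n \;=\; \frac{1}{n}\sum_{j=1}^n \widehat{C}_n^{(-j)},
\end{equation*}
since each term $\phi(X_i)\phi(X_i)^\top$ appears in exactly $n-1$ of the $n$ leave-one-out operators, each carrying weight $\frac{1}{n}\cdot\frac{1}{n-1}$. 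Each $\widehat{C}_n^{(-j)}$ is PSD with unit trace, because the kernel is normalized, so it is a density operator.

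\textbf{Step 2 (concavity).} The von Neumann entropy $H(A)=-\mathrm{Tr}(A\log A)$ is concave on density operators. This follows from operator concavity of $t\mapsto -t\log t$, or from the standard fact that $A\mapsto \mathrm{Tr} f(A)$ is concave whenever $f$ is concave. Applying this to the mixture in Step 1 pointwise in the sample gives
\begin{equation*}
    H(\widehat{C}_n)\;\ge\;\frac{1}{n}\sum_{j=1}^n H\bigl(\widehat{C}_n^{(-j)}\bigr).
\end{equation*}

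\textbf{Step 3 (exchangeability).} Because the $X_i$ are i.i.d., each $\widehat{C}_n^{(-j)}$ has the same distribution as $\widehat{C}_{n-1}$ built from $n-1$ i.i.d. samples. Hence $\mathbb{E}[H(\widehat{C}_n^{(-j)})]=\mathbb{E}[H(\widehat{C}_{n-1})]$ for every $j$, and taking expectations in Step 2 yields the one-step inequality.

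\textbf{Main obstacle.} The delicate point is making concavity and the expectations rigorous when $\mathcal{H}$ is infinite-dimensional. Concavity itself is harmless: every $\widehat{C}_n^{(-j)}$ and $\widehat{C}_n$ is supported on the finite-dimensional span of $\phi(X_1),\ldots,\phi(X_n)$. We can therefore restrict to that span, where the matrix concavity result applies and the nonzero spectrum (hence $H$) is unchanged. Integrability is also straightforward: $0\le H(\widehat{C}_n)\le \log n$ because the rank is at most $n$, so all expectations are finite and the operations above are justified. Measurability of $H(\widehat{C}_n)$ follows from continuity of the eigenvalues as functions of the Gram matrix $K$.
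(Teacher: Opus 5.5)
Your proof is correct and is essentially the paper's argument: the paper averages over a uniformly random size-$n$ subset of $n+1$ samples, which is exactly your leave-one-out average, and then applies concavity of von Neumann entropy (Jensen) followed by exchangeability. Your remarks on restricting to the finite-dimensional span of the features and on integrability ($0\le H(\widehat{C}_n)\le \log n$) supply details the paper leaves implicit.
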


\begin{proof}We defer the proof to the Appendix.\end{proof}

\paragraph{Experiment 2: a mixture model with an explicit effective support.}
To connect the discrete intuition of Experiment~1 with kernel-based log-Vendi, we consider a continuous Gaussian mixture distribution with a latent discrete structure.
We work in $\mathbb{R}^d$ for $d=10,12,14$ and define a Gaussian mixture with $r=2^d$ uniformly-weighted Gaussian components ($\frac{1}{r}$ weight for each component) centered at the $2^d$ vertices of the $ d$-dimensional cube $[-1,1]^d$.
Each component has isotropic covariance $\sigma^2 I$ with $\sigma=10^{-4}$, yielding $r$ separated clusters.
We evaluate log-Vendi using a Gaussian kernel with bandwidth $\tau=0.1$.
For each $r\in\{2^{10},2^{12},2^{14}\}$ and the same grid of $n$ values as in Experiment~1, we repeat the evaluation $M=10$ times
and plot $\exp(H(\widehat{C}_n))$ (i.e., the Vendi score) with confidence intervals.
The resulting curves in Figure~\ref{fig:gaussian mixture bias} (in the Appendix) display the same qualitative behavior observed in the discrete case:
Vendi increases with $n$ and exhibits a systematic downward gap at moderate sample sizes, with a larger gap for greater $r$.\vspace{1mm}

The results in this section suggest that the empirical distribution $\widehat{P}_n$ of finite $n$ samples could exhibit an entropy gap compared to the true distribution $P$. In the next section, we numerically analyze this entropy gap, which can cause a downward diversity bias in trained generative models over finite datasets relative to the underlying distribution of the test data.

\section{Evaluating Downward Entropy Bias in Generative Models}
\label{sec:eval_downward_bias}

We empirically evaluate whether entropy-based diversity scores reveal a systematic gap between real datasets and samples produced by standard pretrained generative models.
Following Section~\ref{sec:entropy_bias_to_vendi}, we frame diversity evaluation as a finite-sample statistical problem:
for any distribution $P$, either the empirical distribution induced by a dataset or a learned model distribution, we consider the expected Vendi score of an i.i.d.\ sample of size $n$ drawn from $P$.
Our goal is to compare these expected scores under a matched evaluation protocol.\vspace{1mm}

We focus on ImageNet~\citep{deng2009imagenet}, FFHQ~\citep{karras2019style}, and MS-COCO~\citep{lin2014coco}, and use pretrained models and generated samples from the \texttt{dgm-eval} repository~\citep{stein_exposing_2023}.
For each dataset and each model, finite-dataset-size expected Vendi scores are estimated by averaging the Vendi score over $5$ independently sampled subsets of each size $n$, and we report the mean and 95\% confidence interval.\vspace{1mm}

Figures~\ref{fig:ci-dinov2-sigma35} and \ref{fig:models-comparison-2x2} highlight two consistent empirical patterns: 1)
For each dataset, the Vendi score of the real dataset samples increases substantially with the evaluation sample size $n$ across the computationally feasible range (up to 20000). 2) Across all evaluated $n$ values, the Vendi curves of samples generated by standard pretrained models remain consistently below the corresponding curve of the dataset itself.
This separation persists across ImageNet, FFHQ, and MS-COCO.

To complement Vendi, we also report more scalable diversity metrics, including RKE (order-2 kernel entropy in the same embedding) as well as Recall and Coverage metrics implemented in \texttt{dgm-eval}.
Across the same datasets and pretrained models, these metrics similarly indicate that generated samples do not match the diversity of true dataset samples, supporting what is suggested by the Vendi curves. Detailed results and further analysis are provided in Tables~\ref{tab:imagenet_rke_vs_recall_coverage}, \ref{tab:ffhq_rke_vs_recall_coverage} and \ref{tab:lsun_rke_vs_recall_coverage} found in the Appendix.

Finally, we examine whether training-set size contributes to the observed diversity gap.
We train StyleGAN-XL~\citep{sauer2022styleganxl} on ImageNet using subsets of sizes $1{,}000{,}000$, $500{,}000$, $100{,}000$, and observe that the Vendi diversity of generated samples increases markedly with the size of the training set.
A similar trend is observed for an unconditional latent diffusion model~\citep{rombach2022ldm} trained on FFHQ subsets of sizes $70{,}000$, $35{,}000$, and $17{,}500$.
These controlled experiments support the view that limited training-set size can be a substantive contributor to the entropy-based diversity gap observed between real datasets and standard pretrained generative models.

\section{Entropy-Based Diversity Regularization for Generative Models}
\label{sec:entropy_regularization}

The preceding sections reveal a consistent entropy-based diversity gap:
under the same reference-free evaluation protocol, the Vendi/RKE scores of samples
from a learned model distribution $Q_\theta$ can fall below those of samples from the data distribution $P$.
Motivated by the monotone growth of the expected log-Vendi score with sample size,
we view this gap through a regularization lens:
if $P$ lies in a high-entropy region of the distribution space,
then encouraging $Q_\theta$ to move toward that region is a principled way
to reduce an entropy-defined notion of diversity error.

\textbf{High-entropy regions.}
Let $\mathcal{P}$ be a convex set of probability distributions on $\mathcal{X}$,
and let $H:\mathcal{P}\to\mathbb{R}$ be a concave entropy functional.
For a target level $\rho\in\mathbb{R}$, define the entropy superlevel set
\begin{equation}\label{eq:entropy_superlevel_set_main}
\mathcal{C}_\rho := \bigl\{Q\in\mathcal{P}:\: H(Q)\ge \rho\bigr\}.
\end{equation}
By concavity of $H$, $\mathcal{C}_\rho$ is a convex set, i.e. for every $Q_1,Q_2\in \mathcal{C}_\rho$ and $\alpha\in [0,1]$, then $\alpha Q_1 + (1-\alpha)Q_2\in \mathcal{C}_\rho$.
In discrete settings, $H$ corresponds to Shannon entropy.
In our kernelized setting, $H$ corresponds to the population von Neumann entropy
of the kernel covariance operator (population log-Vendi), which is concave in the distribution.

\begin{figure}[t]
    \centering
    \includegraphics[width=0.7\linewidth]{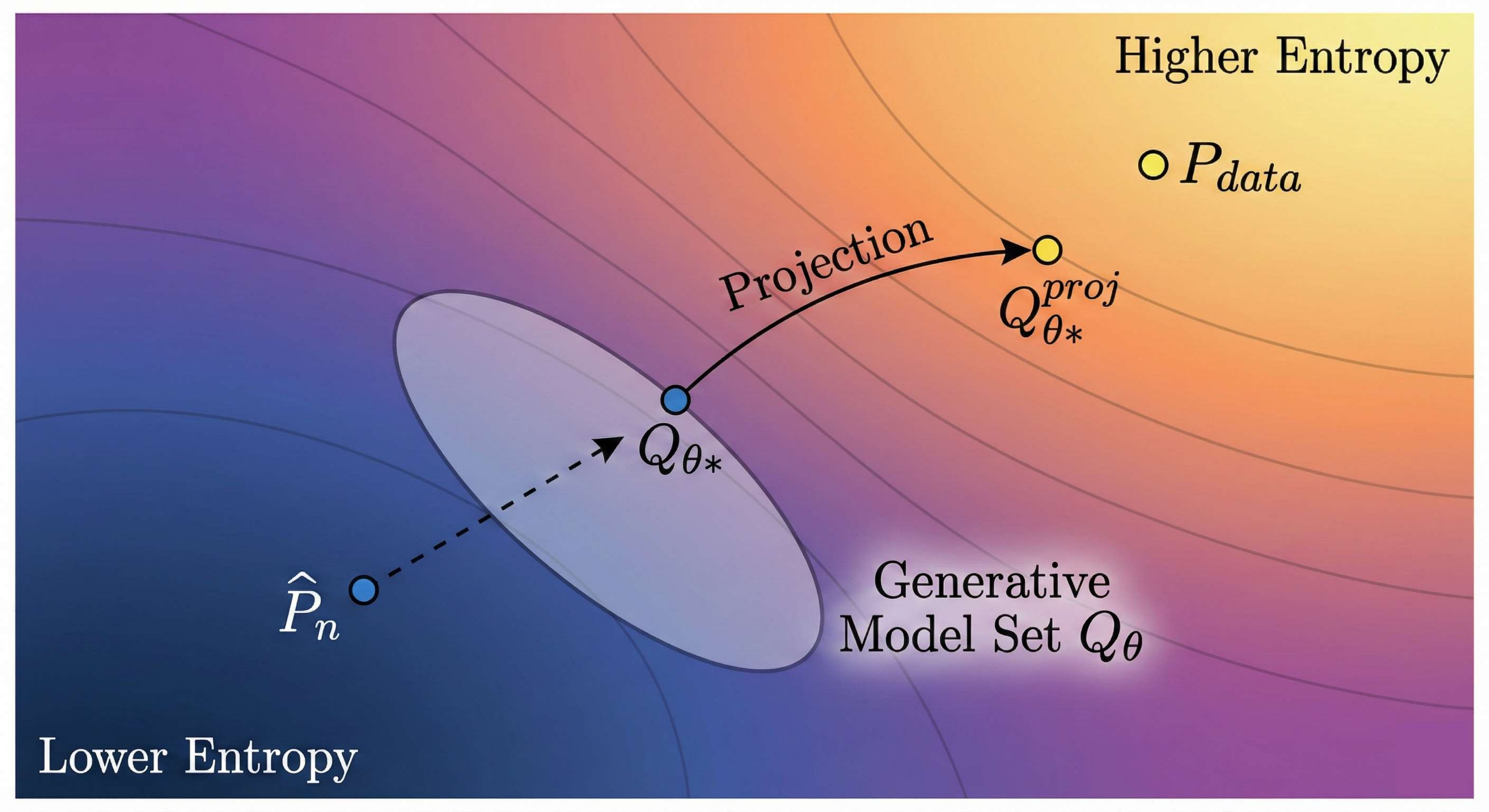}
    \caption{Entropy level sets in distribution space. $\widehat{P}_n$ and $Q_{{\theta}^*}$ lie in a lower-entropy region; $P_{\mathrm{data}}$ lies higher. Projection onto the entropy superlevel set yields $Q_{{\theta}^{\mathrm{proj}}}$, closer to $P_{\mathrm{data}}$ within the model family.
}
    \label{fig:projection-principle}
\end{figure}

\textbf{Projection principle.}
The key implication of the convexity of $\mathcal{C}_\rho$ distribution set is that projecting a learned distribution onto this high-entropy region
will improve its proximity to any target distribution already in the region,
under common discrepancy families used in generative modeling.

\begin{theorem}[Entropy projection principle]\label{thm:entropy_projection_main}
Assume $\mathcal{C}_\rho$ is a nonempty and convex set.
Let $P\in \mathcal{C}_\rho$ and let $Q_\theta\in\mathcal{P}$.

\emph{(i) Hilbertian distances.}
If $\mathrm{dist}$ is a Hilbertian distance (including MMD and KD),
and $Q^\star$ is the metric projection of $Q_\theta$ onto $\mathcal{C}_\rho$, i.e.
$Q^\star\in\arg\min_{Q\in\mathcal{C}_\rho}\mathrm{dist}(Q_\theta,Q)$,
then
\begin{equation}\label{eq:hilbert_contract_main}
\mathrm{dist}(P,Q^\star)\le \mathrm{dist}(P,Q_\theta).
\end{equation}
\emph{(ii) Bregman divergences.}
If $D_\Phi$ is a Bregman divergence (in particular, the KL divergence),
and $Q^\star$ is the corresponding I-projection
$Q^\star\in\arg\min_{Q\in\mathcal{C}_\rho} D_\Phi(Q,Q_\theta)$,
then
\begin{equation}\label{eq:bregman_contract_main}
D_\Phi(P,Q^\star)\le D_\Phi(P,Q_\theta).
\end{equation}
\end{theorem}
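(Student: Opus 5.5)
Both parts rest on the same geometric fact: a projection onto a closed convex set moves a point no farther from any point already inside that set. For (i) this is the Pythagorean/obtuse-angle inequality of Hilbert-space projection. For (ii) it is the generalized Pythagorean inequality for Bregman projections.

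\emph{Part (i).} Since $\mathrm{dist}$ is Hilbertian, we fix a Hilbert space $\mathcal{H}$ and an injective affine embedding $\mu:\mathcal{P}\to\mathcal{H}$ with $\mathrm{dist}(Q_1,Q_2)=\|\mu(Q_1)-\mu(Q_2)\|_{\mathcal{H}}$. For MMD/KD this is the kernel mean embedding $\mu(Q)=\mathbb{E}_Q[\phi(X)]$, which is linear in $Q$. Convexity of $\mathcal{C}_\rho$ then makes $\mu(\mathcal{C}_\rho)$ a convex subset of $\mathcal{H}$. Write $q=\mu(Q_\theta)$, $q^\star=\mu(Q^\star)$, $p=\mu(P)$.
\begin{itemize}
\item Optimality of $Q^\star$ compared with the segment $(1-t)Q^\star+tP\in\mathcal{C}_\rho$, $t\in[0,1]$, gives that $t\mapsto\|q-q^\star-t(p-q^\star)\|^2$ is minimized at $t=0$.
\item Differentiating at $t=0^+$ gives the variational inequality $\langle q-q^\star,\,p-q^\star\rangle\le 0$.
\item Expanding $\|p-q\|^2=\|p-q^\star\|^2+\|q^\star-q\|^2-2\langle p-q^\star,\,q-q^\star\rangle$ then yields $\|p-q\|^2\ge\|p-q^\star\|^2+\|q-q^\star\|^2\ge\|p-q^\star\|^2$.
\end{itemize}
This is \eqref{eq:hilbert_contract_main}. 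Closedness is only needed for existence of the minimizer, and the statement assumes $Q^\star$ exists.

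\emph{Part (ii).} Write $D_\Phi(x,y)=\Phi(x)-\Phi(y)-\langle\nabla\Phi(y),x-y\rangle$ with $\Phi$ strictly convex and differentiable. For KL, $\Phi(Q)=\sum Q\log Q$ or its integral analogue, and differentiability is understood on the relative interior. Use the three-point identity
\begin{equation*}
D_\Phi(P,Q_\theta)=D_\Phi(P,Q^\star)+D_\Phi(Q^\star,Q_\theta)+\langle\nabla\Phi(Q^\star)-\nabla\Phi(Q_\theta),\,P-Q^\star\rangle .
\end{equation*}
Minimality of $Q^\star$ for $Q\mapsto D_\Phi(Q,Q_\theta)$ along the segment $Q^\star+t(P-Q^\star)$ gives a one-sided derivative at $t=0$ equal to $\langle\nabla\Phi(Q^\star)-\nabla\Phi(Q_\theta),P-Q^\star\rangle\ge 0$. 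Hence $D_\Phi(P,Q_\theta)\ge D_\Phi(P,Q^\star)+D_\Phi(Q^\star,Q_\theta)\ge D_\Phi(P,Q^\star)$, since Bregman divergences are nonnegative. This is \eqref{eq:bregman_contract_main}, Csisz\'ar's Pythagorean inequality for I-projections.

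\emph{Main obstacle.} The main obstacle is the first-order optimality step in (ii) for KL in infinite or continuous settings. There $\nabla\Phi(Q^\star)=\log Q^\star+1$ may be unbounded, or $Q^\star$ may lie on the boundary where some densities vanish, so the directional derivative needs care. I would first handle the finite-alphabet case, where $Q^\star$ has full support whenever $D(Q^\star,Q_\theta)<\infty$ and $P\ll Q_\theta$. I would then follow Csisz\'ar's (1975) argument, which works with the difference quotient $\frac{1}{t}[D((1-t)Q^\star+tP\,\|\,Q_\theta)-D(Q^\star\|Q_\theta)]$ and monotone convergence, so that no gradient is ever written down explicitly. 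If $D_\Phi(P,Q_\theta)=\infty$, the inequality is trivial.
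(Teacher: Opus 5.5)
Your proposal is correct and follows the paper's proof essentially step for step. In (i) you use the obtuse-angle variational inequality for the projection onto $\mu(\mathcal{C}_\rho)$ followed by the norm expansion, and in (ii) the Bregman three-point identity combined with the first-order optimality condition of the I-projection, deriving both variational inequalities from one-sided derivatives along the segment toward $P$ as the paper's optimality characterizations do. Your extra remarks on existence and on the KL boundary case go beyond the paper, which simply assumes $\Phi$ is differentiable on $\Omega$. One correction there: for a general convex set, the minimality argument yields $\mathrm{supp}(P)\subseteq\mathrm{supp}(Q^\star)$ rather than full support of $Q^\star$, and that containment is all the directional-derivative step needs.
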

\begin{proof}
    We defer the proof to the Appendix.
\end{proof}

Theorem~\ref{thm:entropy_projection_main} identifies $\mathcal{C}_\rho$ as a meaningful high-entropy region:
if the target distribution lies in $\mathcal{C}_\rho$, then moving $Q_\theta$ toward $\mathcal{C}_\rho$
cannot increase its discrepancy to $P$ under standard training/evaluation distances.
Formal definitions and proofs are provided in Appendix~\ref{app:entropy_projection_proofs}. The projection process is illustrated in Figure~\ref{fig:projection-principle}.

\subsection{Practical regularization, guidance, and empirical-support entropy projection}
\label{sec:practical_projection}

This subsection translates the projection principle of Theorem~\ref{thm:entropy_projection_main} into concrete, implementable procedures. We first discuss two mechanisms that act \emph{during} model construction, that are training-time regularization for GAN-style objectives and sampling-time guidance for diffusion models. We then present a complementary mechanism that acts \emph{post} sampling: an empirical-support entropy projection that reweights a fixed set of generated samples to satisfy an entropy (Vendi/VNE) or order-2 (RKE) diversity constraint while remaining close to the baseline empirical measure under a kernelized Hilbertian geometry.

\paragraph{Training-time regularization.}
A direct way to promote entropy-based diversity is to augment a divergence-based training objective $\mathcal{L}(\theta)$ with an entropy term,
\begin{equation}\label{eq:entropy_regularized_objective_final}
\min_\theta\; \mathcal{L}(\theta) - \lambda\, H_{\mathrm{VNE}}(Q_\theta),\qquad \lambda>0.
\end{equation}
In our kernelized setting, $H_{\mathrm{VNE}}(Q_\theta)$ corresponds to the population log-Vendi, i.e., the von Neumann entropy (VNE) of a kernel covariance operator. Since VNE involves eigendecomposition, we also consider the order-2 surrogate inverse-RKE \citep{jalali2023rke}
\begin{equation}\label{eq:irke_def_final}
\mathrm{Inverse\text{-}RKE}(Q_\theta)
:= \mathbb{E}_{X,X'\sim Q_\theta}\bigl[k(X,X')^2\bigr],
\end{equation}
with $\mathrm{RKE}(Q_\theta)=1/\mathrm{Inverse\text{-}RKE}(Q_\theta)$.

\paragraph{Sampling-time guidance (diffusion/latent diffusion).}
For diffusion and latent diffusion models, entropy-aligned corrections can be applied along the reverse-time trajectory. Let $\mathcal{S}_\psi$ produce an intermediate proposal
\begin{equation}\label{eq:generic_sampler_final}
\tilde z_{t-1} = \mathcal{S}_\psi(z_t,t).
\end{equation}
Given a memory bank $\mathcal{M}=\{z^{(i)}\}_{i=1}^m$, define
\begin{equation}\label{eq:irke_latent_final}
\mathcal{J}_{\mathrm{Inverse\text{-}RKE}}(z;\mathcal{M})
:= \frac{1}{m}\sum_{i=1}^m k(z,z^{(i)})^2,
\end{equation}
leading to the guided update
\begin{equation}\label{eq:diffusion_irke_guidance_final}
z_{t-1}
= \tilde z_{t-1}
- \eta_t \nabla_z \mathcal{J}_{\mathrm{Inverse\text{-}RKE}}(\tilde z_{t-1};\mathcal{M}),
\end{equation}
with $\eta_t\ge 0$. This is the prompt-free analogue of SPARKE-guidance \citep{jalali2025sparke}; related Vendi/VNE-based guidance has also been explored in \citep{hemmat2024cvsg}.

\paragraph{Empirical-support post-hoc projection via reweighting.}
Beyond training-time regularization and sampling-time guidance, we consider a post-hoc mechanism that directly instantiates the geometric ``projection-to-high-entropy'' perspective on a fixed sample set. Given $M$ samples $\{x_i\}_{i=1}^M$ generated from a trained model $Q_\theta$, we form a reweighted empirical measure
\[
Q_q=\sum_{i=1}^M q_i\,\delta_{x_i},
\qquad q\in\Delta_M:=\Bigl\{q\ge 0,\ \sum_{i=1}^M q_i=1\Bigr\},
\qquad q_0=\frac{1}{M}\mathbf{1}.
\]
This reweighting preserves the support and only redistributes probability mass across already-generated samples.

The projection viewpoint suggests choosing $q$ to (i) remain close to the baseline empirical model $Q_{q_0}$ under a Hilbertian discrepancy (e.g., MMD/KD), while (ii) enforcing a target diversity level. Concretely, we consider the empirical-support projection program
\begin{equation}\label{eq:main_posthoc_projection}
\begin{aligned}
\min_{q\in\Delta_M}\quad & \mathrm{KD}\left(Q_q,Q_{q_0}\right) := (q-q_0)^\top K(q-q_0)\\
\text{s.t.}\quad & H_{\mathrm{VNE}}(Q_q):= H_{\mathrm{VNE}}\bigl(\mathrm{diag}(q)^{1/2} K \mathrm{diag}(q)^{1/2}\bigr) \ge\ \rho,
\end{aligned}
\end{equation}
where $\mathrm{KD}$ denotes the kernel distance induced by $k$ (equivalently, squared MMD on the fixed support), which for the kernel matrix $K$ of generated samples $x_1,\ldots , x_n$ will be $(q-q_0)^\top K(q-q_0)$. The above formulation also motivates the following Lagrangian log-Vendi $H_{\mathrm{VNE}}$ penalty variant for a regularization coefficient $\lambda > 0$:
\begin{equation}\label{eq:main_posthoc_projection_vendi_lagrangian}
\min_{q\in\Delta_M}\quad \mathrm{KD}\left(Q_q,Q_{q_0}\right)\ -\ \lambda\,H_{\mathrm{VNE}}(Q_q)
\end{equation}
As an efficient order-2 alternative aligned with the RKE score, one can also use a penalized variant based on the inverse-RKE score,
\begin{equation}\label{eq:main_posthoc_projection_rke}
\min_{q\in\Delta_M}\quad \mathrm{KD}\left(Q_q,Q_{q_0}\right)\ +\ \lambda\,\mathrm{Inverse\text{-}RKE}(Q_q)
\end{equation}
Both formulations implement the same principle: adjust the probability mass over the generated samples to reach a higher-diversity region while staying close to the original empirical model. Full technical definitions, convexity properties, and optimization algorithms are presented in Appendix~\ref{app:empirical_support_projection}.

\section{Numerical Results}

\paragraph{Model comparison.} Extending Figure~\ref{fig:ci-dinov2-sigma35}, we evaluated diverse architectures on ImageNet, FFHQ, LSUN, and MSCOCO. Figure~\ref{fig:models-comparison-2x2} confirms a systematic downward diversity bias: generative models consistently rank below real data across all sample sizes and embeddings. We provide additional results in the Appendix.

\paragraph{Training models with a restricted set of datapoints.}
To further validate our findings, we trained three versions of StyleGAN-XL using the full training set, 50\%, and 10\% of the data, respectively. As shown in Figure~\ref{fig:stylegan-xl-train}, model diversity drops as the training set size decreases. Notably, even when overfitted to smaller datasets, the model fails to replicate training samples sufficiently to recover diversity. Similarly, Latent Diffusion Models (LDMs) trained on the FFHQ dataset exhibit a comparable decline in diversity. To provide a comprehensive evaluation, we supplement the Vendi Score with RKE, Recall, and Coverage.

\begin{figure}[t]
    \centering
    \includegraphics[width=\linewidth]{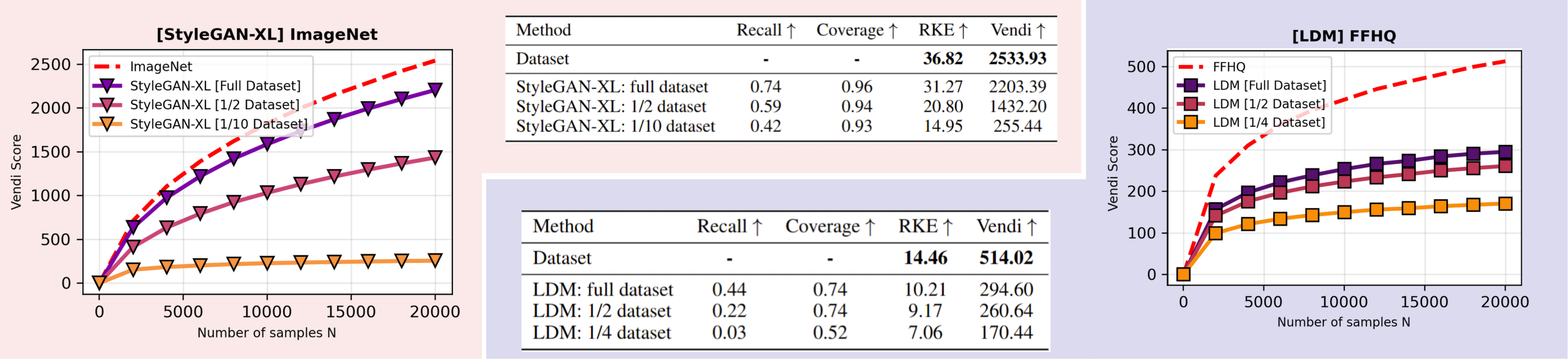}
    \caption{Vendi Score comparison across varying training set sizes. Results are shown for StyleGAN-XL (shaded red) and LDMs (shaded blue), with the training dataset baseline indicated by a dashed red line. Corresponding RKE, Recall, and Coverage metrics are also presented.}
    \label{fig:stylegan-xl-train}\vspace{-2mm}
\end{figure}
\begin{figure}
    \centering
    \includegraphics[width=0.95\linewidth]{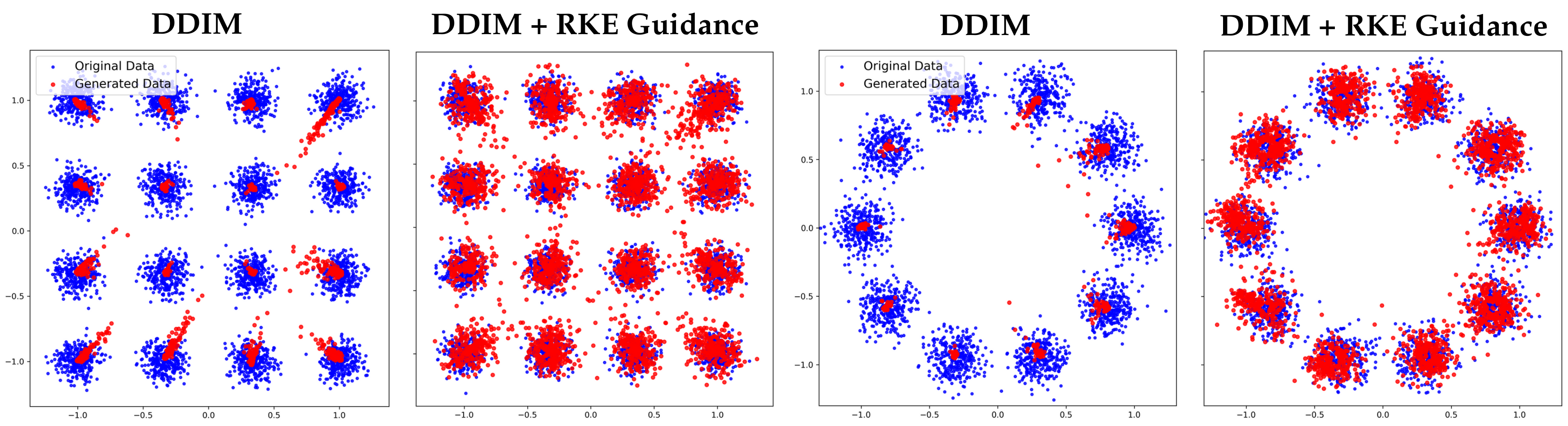}
    \caption{Comparison of the diffusion model (DDIM) with RKE regularized DDIM on an underlying 2D-Gaussian mixture model.}    \label{fig:gaussian_16_ddim}\vspace{-5mm}
\end{figure}

\begin{figure}
    \centering
    \includegraphics[width=0.98\linewidth]{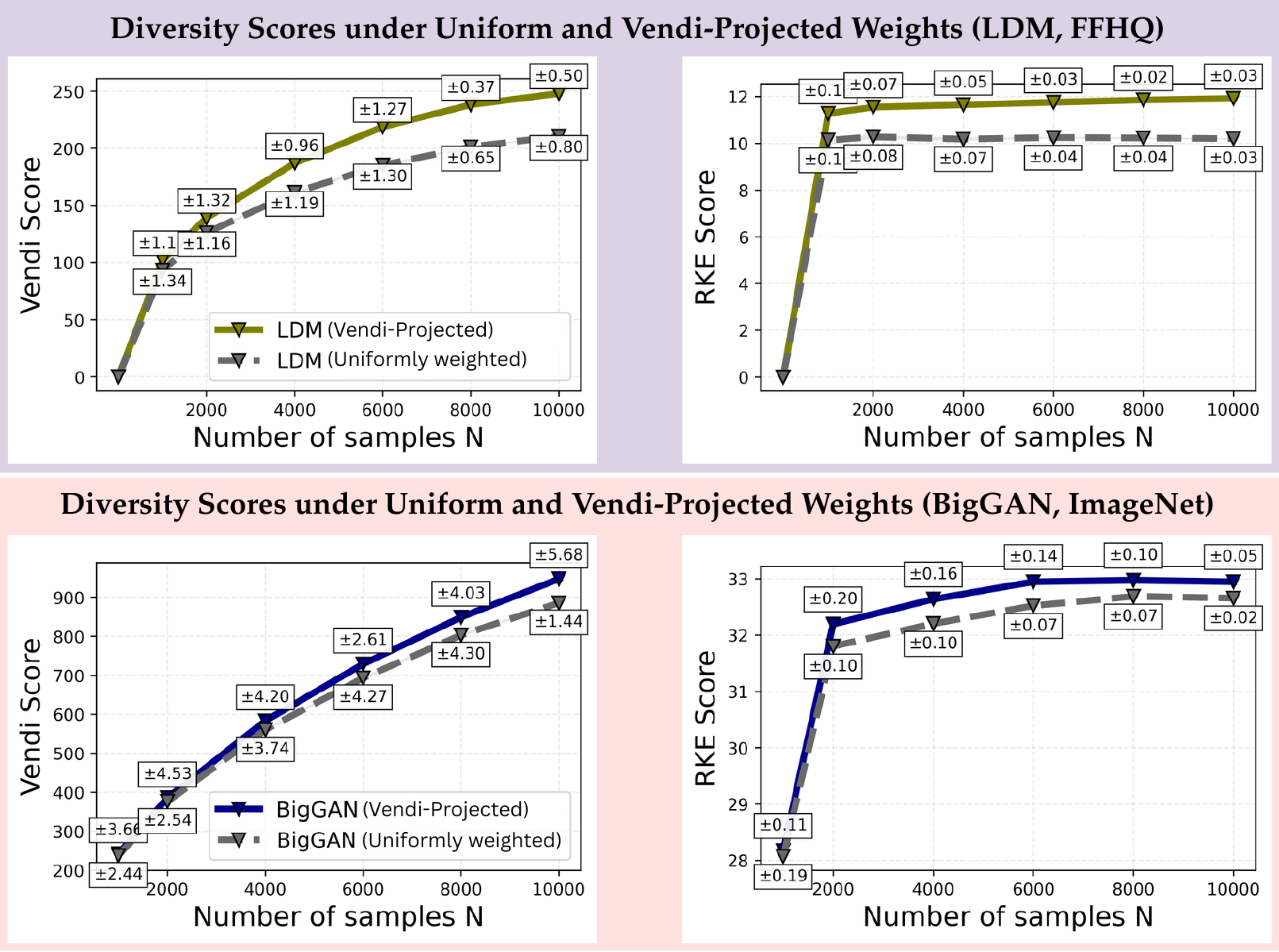}
    \caption{Vendi and RKE diversity scores vs. size $N$ for $N$ uniformly weighted generated samples and their post-hoc Vendi-projected reweighting. (Upper) LDM trained on FFHQ. (Lower) BigGAN trained on ImageNet. The Vendi-based projection yields higher diversity scores for the reweighted empirical distribution.
    }    \label{fig:weighted_samples_vendi_rke}\vspace{-3mm}
\end{figure}

\paragraph{Entropy-based guidance for generative models.} 
We evaluated the impact of RKE and Vendi regularization on generative models, demonstrating that guidance can effectively mitigate diversity biases while improving FD and KD scores. As shown in Figure~\ref{fig:intro_fig}, applying inverse-RKE guidance to the DiT-XL-2 diffusion model \cite{Peebles2022DiT} enhances both diversity and fidelity metrics. We observe similar results when guiding the DDIM model \cite{song2020denoising} on 2D Gaussian mixtures (Figure \ref{fig:gaussian_16_ddim}); increases in Vendi, RKE, and Recall confirm improved diversity, while better FD, KD, and Precision scores indicate a closer alignment with the dataset distribution. Extended quantitative results are provided in the Appendix.

\paragraph{Post-hoc Projected-Vendi reweighting.} To evaluate the effectiveness of the empirical-support projection mechanism, we apply the reweighting procedure in Equation~(\ref{eq:main_posthoc_projection_vendi_lagrangian}) (penalized variant with $\lambda=0.01$) to fixed sample sets of sizes ranging from 1000 to 10000 drawn from several state-of-the-art generative models on FFHQ and ImageNet. For each model and subsample size, we generate one large pool of samples and then create two empirical measures: (i) the uniform baseline (Uniform sampling), and (ii) the Projected-Vendi reweighted distribution obtained by solving the quadratic program. Diversity is measured via Vendi and RKE scores, while distributional proximity to the distribution of test samples is quantified via kernel distance (KD) and Fréchet distance (FD) in DINOv2 feature space. In Figure~\ref{fig:weighted_samples_vendi_rke}, we observe the Vendi and RKE trends as the number of samples increases. We find that diversity scores consistently improve across different sample sizes. The quantitative results, reported in Table~\ref{table:ffhq_imagenet} for ProjectedGAN \cite{sauer2021projected-gan}, BigGAN \cite{brock_large_2018}, and LDM \cite{rombach2022ldm}, demonstrate that Projected-Vendi consistently achieves considerably higher Vendi and RKE scores compared to uniform weighting, while incurring only modest increases in KD and FD, thereby confirming the projection's ability to reduce diversity bias.

\begin{table}
    \centering
    \caption{Evaluated scores for comparison of Uniform sampling vs. Vendi-Projected sampling.}   \label{table:ffhq_imagenet}    
    \begin{tabular}{cclrrrr}
    \toprule
    & Model & Sampling & Vendi $\uparrow$ & RKE $\uparrow$ & KD $\downarrow$ & FD $\downarrow$ \\
    \midrule
    \multirow{6}{*}{\rotatebox[origin=c]{90}{\textbf{FFHQ}}}
    & \raisebox{-2.5ex}{ProjectedGAN} & Uniform          & 193.64 & 9.31 & 1.958 & 586.43 \\
    &              & Projected-Vendi   & 237.34 & 10.88 & 1.642 & 562.92 \\
    \cmidrule(lr){2-7}
    & \raisebox{-2.5ex}{LDM}          & Uniform          & 210.32 & 10.21 & 0.587 & 223.32 \\
    &              & Projected-Vendi   & 248.01 & 11.92 & 0.462 & 213.87 \\
    \midrule
    \multirow{6}{*}{\rotatebox[origin=c]{90}{\textbf{ImageNet}}}
    & \raisebox{-2.5ex}{BigGAN}       & Uniform          & 885.68 & 32.56 &  0.539 & 403.57 \\
    &              & Projected-Vendi   & 947.32 & 33.27 & 0.517 & 386.34 \\
    \cmidrule(lr){2-7}
    & \raisebox{-2.5ex}{LDM}          & Uniform          & 889.44 & 33.83 & 0.102 & 124.31 \\
    &              & Projected-Vendi   & 919.18 & 34.89 & 0.094 & 105.95 \\
    \bottomrule
    \end{tabular}
\end{table}

\vspace{-1mm}
\section{Conclusion}\vspace{-1mm}
We studied entropy-based diversity in generative models using reference-free metrics and showed that finite-sample entropy exhibits a systematic downward bias relative to the underlying data distribution. Empirically, real datasets consistently achieve higher Vendi/RKE diversity than samples from standard pretrained generative models under matched evaluation protocols, and increasing training-set size improves model diversity. These results suggest that finite-sample effects and training-data limitations can contribute to observed diversity gaps. Our numerical studies rely mainly on controlled synthetic settings and finite-sample evaluations, and while entropy-based regularization and guidance are theoretically motivated, we do not implement exact entropy projection operators due to computational and optimization challenges. Extending these ideas to scalable implementations and broader benchmarks remains a direction for future work.\vspace{-3mm}

\clearpage
\bibliographystyle{plainnat}
\bibliography{ref}

 \begin{appendices}
 \section{Proofs}
\subsection{Proof of Proposition \ref{prop:monotone_logvendi}}

The von Neumann entropy is concave on the set of positive semidefinite, unit-trace operators (set of density matrices).
Fix $n\ge 1$ and draw $X_1,\ldots,X_{n+1}\sim P$ i.i.d.
Let $Z_i:=\phi(X_i)\phi(X_i)^\top$.
Consider a uniformly random subset $S\subset\{1,\ldots,n+1\}$ of size $n$, independent of the data, and define
$\widehat{C}_S=\frac{1}{n}\sum_{i\in S} Z_i$.
Conditioned on $(Z_1,\ldots,Z_{n+1})$, we have
$\mathbb{E}[\widehat{C}_S]=\widehat{C}_{n+1}$.
Therefore, by the concavity of the entropy function, we can apply Jensen's inequality to obtain
\[
H(\widehat{C}_{n+1})
\ge
\mathbb{E}\left[H(\widehat{C}_S)\,\big|\, Z_1,\ldots,Z_{n+1}\right].
\]
Taking expectation and noting that $\widehat{C}_S$ has the same marginal distribution as $\widehat{C}_n$ completes the proof.

\subsection{Proof of concentration of $\log$-$\mathrm{Vendi}$ score around its finite-sample expectation}

Here, we show that a single-sample estimate of log-Vendi (the von Neumann entropy of the empirical
kernel covariance) concentrates around its expectation under i.i.d.\ sampling. This result provides justification on
reporting averaged Vendi/RKE scores over multiple random subsets at a fixed evaluation size.

To review the notation in the result, let $k:\mathcal{X}\times\mathcal{X}\to\mathbb{R}$ be a normalized bounded kernel with
$k(x,x)=1$ and $|k(x,x')|\le 1$ for all $x,x'$.
Let $\phi:\mathcal{X}\to\mathcal{H}$ be a feature map into a real Hilbert space such that
$k(x,x')=\langle \phi(x),\phi(x')\rangle_{\mathcal{H}}$.
For $m\ge 2$ and i.i.d.\ samples $X_{1:m}\sim P$, define the empirical kernel covariance operator
\[
\widehat{C}_m:=\frac{1}{m}\sum_{i=1}^m \phi(X_i)\phi(X_i)^\top,
\]
and the corresponding log-Vendi and Vendi scores
\[
\mathsf{Ent}_m := H(\widehat{C}_m),
\qquad
\mathsf{Vendi}_m := \exp(\mathsf{Ent}_m),
\qquad
H(\rho):=-\mathrm{Tr}(\rho\log\rho).
\]
Note that we define the stability constant $c_m$ as follows
\begin{equation}\label{eq:cm_def_app}
c_m := \frac{1}{m}\log(2m-1)+h\left(\frac{1}{m}\right),
\qquad
h(t):=-t\log t-(1-t)\log(1-t).
\end{equation}
As we show in the theorem's proof, it can be seen that $c_m\le \frac{\log(em)}{m}$ holds for every integer $m$.
\begin{proposition}
\label{prop:vendi_concentration_single_eval_app}
Fix $m\ge 2$ and let $X_{1:m}\overset{\mathrm{i.i.d.}}{\sim}P$.
Then for any $\delta\in(0,1)$, with probability at least $1-\delta$,
\begin{equation}\label{eq:ent_conc_app}
\big|\mathsf{Ent}_m-\mathbb{E}[\mathsf{Ent}_m]\big|
\le
c_m\sqrt{\frac{m}{2}\log\left(\frac{2}{\delta}\right)} \le \sqrt{\frac{\log^2(em)\log(2/\delta)}{2m}}.
\end{equation}
Equivalently, letting $t_\delta:=\sqrt{\frac{\log^2(em)\log(2/\delta)}{2m}}$, with probability at least $1-\delta$, the following holds
\begin{equation}\label{eq:vendi_mult_app}
e^{\mathbb{E}[\mathsf{Ent}_m]-t_\delta}
\le
\mathsf{Vendi}_m
\le
e^{\mathbb{E}[\mathsf{Ent}_m]+t_\delta}.
\end{equation}
\end{proposition}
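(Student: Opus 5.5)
The plan is to apply McDiarmid's bounded-differences inequality to $\mathsf{Ent}_m = H(\widehat{C}_m)$, viewed as a function of the independent inputs $X_1,\ldots,X_m$. If replacing a single $X_i$ by an arbitrary $X_i'$ changes $\mathsf{Ent}_m$ by at most $c_m$, then McDiarmid gives
\[
\Pr\bigl(|\mathsf{Ent}_m-\mathbb{E}[\mathsf{Ent}_m]|\ge t\bigr)\le 2\exp\bigl(-2t^2/(m c_m^2)\bigr).
\]
Setting the right-hand side equal to $\delta$ yields the first inequality in \eqref{eq:ent_conc_app}. The second inequality then follows from $c_m\le \log(em)/m$. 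The multiplicative Vendi bound \eqref{eq:vendi_mult_app} is immediate by exponentiating.

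The main step, and the main obstacle, is the bounded-difference estimate. Let $\widehat{C}_m$ and $\widehat{C}_m'$ differ only in the $i$-th sample. Write
\[
\sigma := \frac{1}{m-1}\sum_{j\ne i}\phi(X_j)\phi(X_j)^\top ,
\]
which is a density matrix. Then $\widehat{C}_m=(1-\tfrac1m)\sigma+\tfrac1m\,\psi\psi^\top$ with $\psi=\phi(X_i)$ a unit vector, and $\widehat{C}_m'$ has the same form with $\psi'=\phi(X_i')$. For a mixture $\rho=\sum_k p_k\rho_k$ of density matrices, the standard quantum information bounds are
\[
\sum_k p_k H(\rho_k)\le H(\rho)\le \sum_k p_k H(\rho_k)+h(p).
\]
The first is concavity; the second is the entropy upper bound for ensembles, which one can prove via the Holevo quantity being at most the Shannon entropy of the weights. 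With weights $(1-\tfrac1m,\tfrac1m)$ and a pure state $\psi\psi^\top$ of zero entropy, these give
\[
(1-\tfrac1m)H(\sigma)\le H(\widehat{C}_m)\le (1-\tfrac1m)H(\sigma)+h(\tfrac1m),
\]
and the same sandwich holds for $\widehat{C}_m'$. This already yields $|\mathsf{Ent}_m-\mathsf{Ent}_m'|\le h(1/m)$, which is at most $c_m$. So the claimed constant, including its extra $\frac1m\log(2m-1)$ term, follows. That extra term suggests the authors instead used a continuity bound (Fannes–Audenaert) on the trace distance $\|\widehat{C}_m-\widehat{C}_m'\|_1\le 2/m$, restricted to the span of at most $2m-1$ relevant feature vectors. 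Either route suffices; I would present the mixture argument and note that it gives the bound $c_m$ a fortiori. In infinite dimensions everything lives in the finite-dimensional span of the $m+1$ feature vectors involved, so the finite-dimensional inequalities apply there.

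The remaining task is to verify $c_m\le \log(em)/m$. Expand
\[
h(1/m)=\frac{\log m}{m}+\Bigl(1-\frac1m\Bigr)\log\frac{m}{m-1}.
\]
Using $\log\frac{m}{m-1}\le \frac{1}{m-1}$, the second term is at most $\frac1m$, so $h(1/m)\le \frac{1+\log m}{m}=\frac{\log(em)}{m}$. This shows the mixture-based difference bound is itself at most $\log(em)/m$. That is exactly what the final inequality needs, since one may run McDiarmid with $\log(em)/m$ directly.

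If one insists on the literal $c_m$, including the $\frac1m\log(2m-1)$ term, then $c_m\le\log(em)/m$ needs care: the sum $\frac{\log(2m-1)+\log m}{m}$ exceeds $\frac{\log(em)}{m}$. I would sidestep this by stating that the bounded difference is at most $h(1/m)\le\min\{c_m,\log(em)/m\}$. This yields both displayed inequalities in \eqref{eq:ent_conc_app}.
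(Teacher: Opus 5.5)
Your proof is correct, and it takes a different route from the paper at the one step that matters. The paper bounds the replace-one change with Audenaert's Fannes-type continuity inequality, using $\tfrac12\|\widehat{C}-\widehat{C}'\|_1\le 1/m$ and the dimension count $d\le 2m$. The $\log(2m-1)$ term is $\log(d-1)$, so this gives $c_m=\tfrac1m\log(2m-1)+h(1/m)$. The paper then asserts $c_m\le \log(em)/m$ but only checks $h(1/m)\le \tfrac1m\log(em)$. As you noticed, that assertion is false for every $m\ge 2$; already $c_2\approx 1.24>\tfrac12\log(2e)\approx 0.85$. So the paper's argument delivers only the first bound, and it does not actually justify the $t_\delta$ bound or the Vendi sandwich. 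Audenaert's bound cannot reach the needed constant even with the sharper count $d\le m+1$, since it then gives $\tfrac1m\log m+h(1/m)$, which still exceeds $\log(em)/m$. Your argument instead uses a shared $\sigma$, concavity, and the ensemble bound $H(\sum_k p_k\rho_k)\le\sum_k p_kH(\rho_k)+H(p)$. These place both $H(\widehat{C}_m)$ and $H(\widehat{C}_m')$ in the interval $[(1-\tfrac1m)H(\sigma),\,(1-\tfrac1m)H(\sigma)+h(\tfrac1m)]$. This exploits the rank-one replacement structure, is dimension-free, and yields the constant $h(1/m)\le\log(em)/m$ that the final bound needs. The paper's route is more generic, since it covers any perturbation of small trace distance, but it is too lossy here. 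One point to state carefully: you do not prove the chain literally, because its middle link $c_m\sqrt{\tfrac m2\log(2/\delta)}\le t_\delta$ is false. What you prove is that, on a single event of probability at least $1-\delta$, the deviation is at most $h(1/m)\sqrt{\tfrac m2\log(2/\delta)}$. It is therefore at most both $c_m\sqrt{\tfrac m2\log(2/\delta)}$ and $t_\delta$, which is the correct reading of the statement and all that the Vendi bound uses.
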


\begin{proof}
To apply McDiarmid's inequality,
we define the function
\[
f(x_1,\ldots,x_m)
:=
H\left(\frac{1}{m}\sum_{i=1}^m \phi(x_i)\phi(x_i)^\top\right).
\]
We will show that replacing any single input changes $f$ by at most $c_m$.

Fix an index $j\in[m]$ and consider two input sequences that differ only at the $j$-th coordinate:
$(x_1,\ldots,x_j,\ldots,x_m)$ and $(x_1,\ldots,x_j',\ldots,x_m)$.
Let $\widehat{C}$ and $\widehat{C}'$ be the corresponding empirical covariance operators.
Write $u=\phi(x_j)$ and $v=\phi(x_j')$.
Since $k$ is normalized, $\|u\|_{\mathcal{H}}^2=k(x_j,x_j)=1$ and similarly $\|v\|_{\mathcal{H}}=1$.
Then
\[
\widehat{C}-\widehat{C}'
=
\frac{1}{m}\big(uu^\top - vv^\top\big).
\]
Each rank-one operator has trace norm
$\|uu^\top\|_1=\mathrm{Tr}(uu^\top)=\|u\|_{\mathcal{H}}^2=1$ (and similarly for $vv^\top$),
hence by the triangle inequality
\[
\|uu^\top - vv^\top\|_1 \le \|uu^\top\|_1+\|vv^\top\|_1 = 2
\quad\Longrightarrow\quad
\|\widehat{C}-\widehat{C}'\|_1 \le \frac{2}{m}.
\]
Let
\[
\Delta := \frac{1}{2}\|\widehat{C}-\widehat{C}'\|_1 \le \frac{1}{m}.
\]
Both $\widehat{C}$ and $\widehat{C}'$ are PSD with unit trace and rank at most $m$,
since each is an average of $m$ rank-one PSD operators.
Let $d$ be the dimension of the subspace spanned by $\mathrm{supp}(\widehat{C})\cup \mathrm{supp}(\widehat{C}')$.
Then
\[
d \le \mathrm{rank}(\widehat{C})+\mathrm{rank}(\widehat{C}') \le 2m.
\]
Von Neumann entropy depends only on the nonzero spectrum, thus we may restrict both operators to this $d$-dimensional subspace.
Applying Audenaert's continuity bound for von Neumann entropy \cite{audenaert2007sharp} results in the following inequality:
\begin{equation}\label{eq:audenaert_bound_app}
\bigl\vert H(\widehat{C})-H(\widehat{C}')\bigr\vert
\le
\Delta\log(d-1) + h(\Delta).
\end{equation}
Using $d\le 2m$ and $\Delta\le 1/m$, and the facts that $\log(d-1)$ is increasing in $d$ and
$h(\Delta)$ is increasing for $\Delta\in[0,1/2]$ (here $\Delta\le 1/m\le 1/2$ since $m\ge 2$), we obtain
\[
|H(\widehat{C})-H(\widehat{C}')|
\le
\frac{1}{m}\log(2m-1) + h\left(\frac{1}{m}\right)
=
c_m.
\]

Next, we apply McDiarmid's inequality to the defined function. 
Let $\mathsf{Ent}_m = f(X_1,\ldots,X_m)$.
By McDiarmid's inequality, for any $t>0$,
\[
\mathbb{P}\left(\big|\mathsf{Ent}_m-\mathbb{E}[\mathsf{Ent}_m]\big|\ge t\right)
\le
2\exp\left(-\frac{2t^2}{\sum_{j=1}^m c_j^2}\right)
=
2\exp\left(-\frac{2t^2}{m c_m^2}\right).
\]
Setting the right-hand side equal to $\delta$ yields
$t = c_m\sqrt{\frac{m}{2}\log\left(\frac{2}{\delta}\right)}$,
which proves \eqref{eq:ent_conc_app}.

To convert log-Vendi concentration to Vendi concentration, we note that
the bound \eqref{eq:ent_conc_app} implies
$\mathbb{E}[\mathsf{Ent}_m]-t_\delta \le \mathsf{Ent}_m \le \mathbb{E}[\mathsf{Ent}_m]+t_\delta$
with probability at least $1-\delta$.
Exponentiating all terms gives \eqref{eq:vendi_mult_app}.

Also, for a simple upper bound on $h(1/m)$, we note that for $0<p\le 1/2$, a standard inequality is $h(p)\le p\log(e/p)$.
Taking $p=1/m$ (valid for $m\ge 2$) gives
\[
h\bigl(\frac{1}{m}\bigr) \le \frac{1}{m}\log(em).
\]
The proof is therefore complete.
\end{proof}

\subsection{Proof of Theorem~\ref{thm:entropy_projection_main}}
\label{app:entropy_projection_proofs}
We first prove the following lemma on the convexity of the entropy's super-level sets and then prove Parts (i) and (ii) separately. 

\begin{lemma}\label{lem:superlevel_convex}
Let $\mathcal{P}$ be a convex set and let $H:\mathcal{P}\to\mathbb{R}$ be concave.
For any $\rho\in\mathbb{R}$, the set
\[
\mathcal{C}_\rho := \{Q\in\mathcal{P}: H(Q)\ge \rho\}
\]
is  a convex set.
\end{lemma}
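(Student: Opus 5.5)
The plan is a direct verification from the definition of convexity, using only the concavity of $H$ and the convexity of the domain $\mathcal{P}$.

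First I will fix an arbitrary level $\rho\in\mathbb{R}$, two elements $Q_1,Q_2\in\mathcal{C}_\rho$, and a mixing weight $\alpha\in[0,1]$. I set $Q_\alpha:=\alpha Q_1+(1-\alpha)Q_2$. Since $\mathcal{P}$ is convex and $Q_1,Q_2\in\mathcal{P}$, we have $Q_\alpha\in\mathcal{P}$, so $H(Q_\alpha)$ is defined.

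Next I apply the concavity of $H$ on $\mathcal{P}$ and then the membership conditions $H(Q_1)\ge\rho$ and $H(Q_2)\ge\rho$. This uses that the weights $\alpha$ and $1-\alpha$ are nonnegative and sum to one:
\[
H(Q_\alpha)\;\ge\;\alpha H(Q_1)+(1-\alpha)H(Q_2)\;\ge\;\alpha\rho+(1-\alpha)\rho\;=\;\rho .
\]
Therefore $Q_\alpha\in\mathcal{C}_\rho$, which is exactly convexity of $\mathcal{C}_\rho$.

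If $\mathcal{C}_\rho$ is empty or a singleton, the claim holds vacuously, so no separate case analysis is required. There is no substantive obstacle. The only point to check is that $Q_\alpha$ lies in the domain of $H$, and this is guaranteed by the assumed convexity of $\mathcal{P}$.

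For the kernelized instance used in the main text, I will add a remark that the hypothesis is met. The map $Q\mapsto\widetilde{C}_Q=\mathbb{E}_Q[\phi(X)\phi(X)^\top]$ is affine in $Q$, and von Neumann entropy is concave on density operators. Their composition is therefore concave on $\mathcal{P}$, and the lemma applies to the population log-Vendi.
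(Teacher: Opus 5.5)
Your proof is correct and essentially identical to the paper's. Both arguments fix $Q_1,Q_2\in\mathcal{C}_\rho$ and $\alpha\in[0,1]$, use convexity of $\mathcal{P}$ to place $Q_\alpha$ in the domain, and combine concavity of $H$ with $H(Q_i)\ge\rho$ to conclude $H(Q_\alpha)\ge\rho$. Your closing remark on the kernelized case is consistent with how the paper uses the lemma, but in an infinite-dimensional RKHS you would also need the entropy to be finite, so that $H$ is $\mathbb{R}$-valued as the lemma assumes.
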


\begin{proof}
Take any $Q_1,Q_2\in\mathcal{C}_\rho$ and any $\alpha\in[0,1]$.
Since $\mathcal{P}$ is convex, $Q_\alpha:=\alpha Q_1+(1-\alpha)Q_2\in\mathcal{P}$.
By concavity of $H$,
\[
H(Q_\alpha)\ \ge\ \alpha H(Q_1)+(1-\alpha)H(Q_2)\ \ge\ \alpha\rho+(1-\alpha)\rho\ =\ \rho
\]
Therefore, $Q_\alpha\in\mathcal{C}_\rho$.
\end{proof}


\subsubsection*{(i) Hilbertian distances}
We first review the definitions of Hilbertian distance and Hilbertian distance projection. 

We call $\mathrm{dist}$  \emph{Hilbertian} on $\mathcal{P}$ if there exist a Hilbert space
$\mathcal{H}$ and an affine map $\mu:\mathcal{P}\to\mathcal{H}$ such that for all $P,Q\in\mathcal{P}$,
\[
\mathrm{dist}(P,Q)=\|\mu(P)-\mu(Q)\|_{\mathcal{H}}.
\]

Also, for the Hilbertian distance projection, we assume $\mathcal{C}_\rho$ is nonempty and convex, and that $\mu(\mathcal{C}_\rho)$ is closed in
$\mathcal{H}$ (equivalently, $\mathcal{C}_\rho$ is closed under $\mathrm{dist}$).
Then for every $Q_\theta\in\mathcal{P}$, Hilbertian distance projection is
\[
Q^\star\in\arg\min_{Q\in\mathcal{C}_\rho}\mathrm{dist}(Q_\theta,Q).
\]

\begin{lemma}[Pythagorean inequality for Hilbert projections]
\label{lem:hilbert_pythag}
Let $C\subseteq\mathcal{H}$ be nonempty, closed, and convex.
Let $x^\star=\Pi_C(x)$ denote the metric projection of $x$ onto $C$.
Then for every $y\in C$,
\[
\|y-x\|_{\mathcal{H}}^2\ \ge\ \|y-x^\star\|_{\mathcal{H}}^2+\|x-x^\star\|_{\mathcal{H}}^2.
\]
\end{lemma}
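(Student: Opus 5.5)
The plan is to prove the claim from the variational (obtuse-angle) characterization of the metric projection onto a closed convex set. Existence and uniqueness of $x^\star=\Pi_C(x)$ are standard for nonempty closed convex subsets of a Hilbert space: take a minimizing sequence, show it is Cauchy via the parallelogram law, and use closedness of $C$. The main work is then to show that $x^\star$ satisfies
\[
\langle x - x^\star,\; y - x^\star\rangle_{\mathcal{H}} \le 0 \qquad \text{for all } y\in C.
\]

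\textbf{Step 1 (variational inequality).} Fix $y\in C$ and $t\in(0,1]$. By convexity, $z_t := x^\star + t(y-x^\star)\in C$, so minimality of $x^\star$ gives $\|x-z_t\|^2\ge \|x-x^\star\|^2$. Expanding the left-hand side yields
\[
\|x-x^\star\|^2 - 2t\langle x-x^\star, y-x^\star\rangle + t^2\|y-x^\star\|^2 \ \ge\ \|x-x^\star\|^2 .
\]
Cancelling, dividing by $t>0$, and letting $t\downarrow 0$ gives the obtuse-angle inequality. This is the only step requiring care, namely the one-sided limit and the use of convexity to stay inside $C$.

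\textbf{Step 2 (Pythagorean expansion).} Write $y-x = (y-x^\star) + (x^\star - x)$ and expand:
\[
\|y-x\|^2 = \|y-x^\star\|^2 + \|x-x^\star\|^2 - 2\langle x-x^\star,\, y-x^\star\rangle .
\]
By Step 1 the cross term is nonnegative, which gives the stated inequality $\|y-x\|^2\ge \|y-x^\star\|^2+\|x-x^\star\|^2$.

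The anticipated obstacle is only bookkeeping, namely keeping the sign of the cross term straight; no earlier result of the paper is needed beyond convexity of $C$, which in the intended application comes from Lemma~\ref{lem:superlevel_convex} and the closedness assumption on $\mu(\mathcal{C}_\rho)$. Applying the lemma with $x=\mu(Q_\theta)$, $y=\mu(P)$, and using affinity of $\mu$ (so that $\mu(\mathcal{C}_\rho)$ is convex) will then immediately yield part (i) of Theorem~\ref{thm:entropy_projection_main}.
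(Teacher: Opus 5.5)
Your proposal is correct and follows essentially the same route as the paper: invoke the obtuse-angle characterization $\langle x-x^\star,\,y-x^\star\rangle_{\mathcal{H}}\le 0$ for all $y\in C$, then expand $\|y-x\|^2$ around $x^\star$ and note that the cross term is nonnegative. The one difference is that you actually derive the variational inequality from minimality, using convexity of $C$ and the limit $t\downarrow 0$. The paper only asserts it with ``it can be seen,'' so your version is slightly more self-contained.
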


\begin{proof}
It can be seen a metric projection in Hilbert spaces satisfies the following:
$x^\star=\Pi_C(x)$ if and only if
\[
\langle x-x^\star,\ y-x^\star\rangle_{\mathcal{H}}\le 0\quad\text{for all }y\in C.
\]
We then expand the difference norm squared as follows
\[
\|y-x\|^2
=\|y-x^\star\|^2+\|x-x^\star\|^2+2\langle y-x^\star,\ x^\star-x\rangle.
\]
The last inner product is $\le 0$ by the projection characterization, and thus the inequality follows.
\end{proof}

To derive \eqref{eq:hilbert_contract_main}, we let $C=\mu(\mathcal{C}_\rho)\subseteq\mathcal{H}$, $x=\mu(Q_\theta)$,
$x^\star=\mu(Q^\star)$, and $y=\mu(P)$ (note $P\in\mathcal{C}_\rho$ implies $y\in C$).
Applying Lemma~\ref{lem:hilbert_pythag} results in the following
\[
\|y-x\|^2\ge \|y-x^\star\|^2+\|x-x^\star\|^2,
\]
and hence in particular $\|y-x^\star\|\le \|y-x\|$.
Translating back using $\mathrm{dist}(P,Q)=\|\mu(P)-\mu(Q)\|_{\mathcal{H}}$ yields
\[
\mathrm{dist}(P,Q^\star)\le \mathrm{dist}(P,Q_\theta).
\]

\subsubsection*{(ii) Bregman divergences}
We first review the definition of Bregman divergence and I-Projection.

Let $\Omega$ be a convex set in a real vector space and let $\Phi:\Omega\to\mathbb{R}$ be differentiable
and strictly convex.
The associated Bregman divergence is
\[
D_\Phi(p,q):=\Phi(p)-\Phi(q)-\langle \nabla\Phi(q),\,p-q\rangle.
\]

Regarding the I-projection,
we assume $\mathcal{C}_\rho$ is nonempty and convex, $\mathcal{C}_\rho\subseteq \Omega$,
and that $\mathcal{C}_\rho$ is closed and the function $p\mapsto D_\Phi(p,Q_\theta)$ is coercive on
$\mathcal{C}_\rho$ (for example, if $\mathcal{C}_\rho$ is compact).
Then, the I-Projection follows from
\[
Q^\star\in\arg\min_{Q\in\mathcal{C}_\rho}D_\Phi(Q,Q_\theta).
\]

\begin{lemma}
\label{lem:three_point}
For any $p,u,q\in\Omega$,  the following hold
\[
D_\Phi(p,q)=D_\Phi(p,u)+D_\Phi(u,q)+\langle \nabla\Phi(u)-\nabla\Phi(q),\,p-u\rangle.
\]
\end{lemma}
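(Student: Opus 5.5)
This identity is a purely algebraic consequence of the definition of $D_\Phi$, so the plan is to expand all three divergences on the right-hand side and check that everything cancels except $D_\Phi(p,q)$. No convexity or differentiability beyond existence of $\nabla\Phi$ at $u$ and $q$ is needed. The only ingredients are linearity of the inner product and the definition $D_\Phi(a,b)=\Phi(a)-\Phi(b)-\langle\nabla\Phi(b),a-b\rangle$.

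Concretely, I will write
\begin{align*}
D_\Phi(p,u)&=\Phi(p)-\Phi(u)-\langle \nabla\Phi(u),p-u\rangle,\\
D_\Phi(u,q)&=\Phi(u)-\Phi(q)-\langle \nabla\Phi(q),u-q\rangle.
\end{align*}
Adding these, the $\Phi(u)$ terms cancel, leaving $\Phi(p)-\Phi(q)-\langle\nabla\Phi(u),p-u\rangle-\langle\nabla\Phi(q),u-q\rangle$.

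Next I add the correction term $\langle\nabla\Phi(u)-\nabla\Phi(q),p-u\rangle$. Its $\nabla\Phi(u)$ part cancels $-\langle\nabla\Phi(u),p-u\rangle$ exactly. Its remaining part $-\langle\nabla\Phi(q),p-u\rangle$ combines with $-\langle\nabla\Phi(q),u-q\rangle$ by linearity in the second argument, giving $-\langle\nabla\Phi(q),p-q\rangle$. The total is then $\Phi(p)-\Phi(q)-\langle\nabla\Phi(q),p-q\rangle=D_\Phi(p,q)$, as claimed.

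There is no real obstacle here. The only thing to watch is the sign bookkeeping in the cross term, making sure the correction term carries $\nabla\Phi(u)-\nabla\Phi(q)$ paired with $p-u$ rather than $u-p$. I will keep that orientation so the lemma can be used directly afterwards. With $u=Q^\star$ and $p=P$, the first-order optimality condition of the I-projection should make the cross term nonnegative, which yields part (ii) of Theorem~\ref{thm:entropy_projection_main}.
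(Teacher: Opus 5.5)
Your proposal is correct and takes essentially the same route as the paper. The paper also expands $D_\Phi(p,u)+D_\Phi(u,q)$ from the definition, cancels the $\Phi(u)$ terms, and regroups the gradient terms by bilinearity to obtain $D_\Phi(p,q)+\langle\nabla\Phi(q)-\nabla\Phi(u),p-u\rangle$, which becomes the stated identity once the cross term is moved to the other side; your sign bookkeeping checks out, and you are right that only the existence of $\nabla\Phi$ at $u$ and $q$ is used.
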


\begin{proof}
We expand each $D_\Phi$ term using the definition as follows:
\begin{align*}
D_\Phi(p,u)+D_\Phi(u,q)
&=\Phi(p)-\Phi(u)-\langle\nabla\Phi(u),p-u\rangle
+\Phi(u)-\Phi(q)-\langle\nabla\Phi(q),u-q\rangle\\
&=\Phi(p)-\Phi(q)-\langle\nabla\Phi(q),p-q\rangle
+\langle\nabla\Phi(q)-\nabla\Phi(u),p-u\rangle\\
&=D_\Phi(p,q)+\langle\nabla\Phi(q)-\nabla\Phi(u),p-u\rangle,
\end{align*}
which is equivalent to the stated identity.
\end{proof}

\begin{lemma}[Bregman Pythagorean inequality for I-projections]
\label{lem:bregman_pythag}
Let $C\subseteq\Omega$ be nonempty and convex, and let
$u\in\arg\min_{v\in C}D_\Phi(v,q)$ be an I-projection of $q$ onto $C$.
Then for every $p\in C$,
\[
D_\Phi(p,q)\ \ge\ D_\Phi(p,u)+D_\Phi(u,q).
\]
\end{lemma}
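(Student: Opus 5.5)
The plan is to prove the Bregman Pythagorean inequality with the three-point identity of Lemma~\ref{lem:three_point}, combined with a first-order optimality condition for the I-projection $u$. With $p\in C$ and $q$ as in the statement, Lemma~\ref{lem:three_point} gives
\[
D_\Phi(p,q)=D_\Phi(p,u)+D_\Phi(u,q)+\langle \nabla\Phi(u)-\nabla\Phi(q),\,p-u\rangle .
\]
It therefore suffices to show that the cross term is nonnegative:
\[
\langle \nabla\Phi(u)-\nabla\Phi(q),\,p-u\rangle\ \ge\ 0\quad\text{for all } p\in C.
\]

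To get this variational inequality, I will use convexity of $C$ and minimality of $u$. For $t\in(0,1]$ set $u_t:=u+t(p-u)=(1-t)u+tp$; this point lies in $C$ because $C$ is convex. Minimality gives $g(t):=D_\Phi(u_t,q)\ge g(0)=D_\Phi(u,q)$ for all $t\in[0,1]$. Since $q$ is fixed, $v\mapsto D_\Phi(v,q)=\Phi(v)-\Phi(q)-\langle\nabla\Phi(q),v-q\rangle$ is differentiable wherever $\Phi$ is. Its directional derivative at $u$ in direction $p-u$ is $\langle \nabla\Phi(u)-\nabla\Phi(q),\,p-u\rangle$. Dividing $g(t)-g(0)\ge 0$ by $t>0$ and letting $t\downarrow 0$ shows this derivative is nonnegative, which is exactly the cross term.

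Substituting this into the three-point identity gives $D_\Phi(p,q)\ge D_\Phi(p,u)+D_\Phi(u,q)$. Since $D_\Phi(u,q)\ge 0$ by convexity of $\Phi$, this also yields $D_\Phi(p,u)\le D_\Phi(p,q)$. Applying it with $q=Q_\theta$, $u=Q^\star$ and $C=\mathcal{C}_\rho$, which is convex by Lemma~\ref{lem:superlevel_convex}, gives part (ii) of Theorem~\ref{thm:entropy_projection_main}.

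The main obstacle is the one-sided derivative step. It needs $\Phi$ to be differentiable at $u$, in the sense that the directional derivative along $p-u$ exists and equals $\langle\nabla\Phi(u),p-u\rangle$. This holds under the standing assumption that $\Phi$ is differentiable on $\Omega$. For KL, where $\Phi$ is negative entropy, differentiability can fail on the boundary of the simplex. In that case I would restrict to the relative interior, or note that the one-sided derivative is $+\infty$ in the bad direction, which only helps the inequality.
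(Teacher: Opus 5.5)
Your proof is correct and matches the paper's: the three-point identity of Lemma~\ref{lem:three_point} combined with the variational inequality $\langle \nabla\Phi(u)-\nabla\Phi(q),\,p-u\rangle\ge 0$. The paper simply asserts this inequality as the first-order optimality condition, whereas you derive it from the one-sided derivative along the segment $u_t=(1-t)u+tp$. One aside in your KL remark has the sign wrong: at a boundary point $u$ with $u_i=0<p_i$ (and interior $q$), the one-sided derivative of $t\mapsto D_\Phi(u_t,q)$ at $t=0$ is $-\infty$, not $+\infty$; the correct conclusion is that such a $u$ cannot be a minimizer, which is what excludes the degenerate case.
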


\begin{proof}
We first establish the first-order optimality condition:
since $v\mapsto D_\Phi(v,q)$ is convex and differentiable with gradient
$\nabla_v D_\Phi(v,q)=\nabla\Phi(v)-\nabla\Phi(q)$,
optimality of $u$ over the convex set $C$ implies
\begin{equation}
\label{eq:bregman_opt_cond}
\langle \nabla\Phi(u)-\nabla\Phi(q),\,p-u\rangle \ge 0
\qquad\text{for all }p\in C.
\end{equation}
Then, we apply Lemma~\ref{lem:three_point} with $(p,u,q)$ to obtain
\[
D_\Phi(p,q)=D_\Phi(p,u)+D_\Phi(u,q)+\langle \nabla\Phi(u)-\nabla\Phi(q),\,p-u\rangle.
\]
Using \eqref{eq:bregman_opt_cond}, the inner product term is nonnegative, hence
$D_\Phi(p,q)\ge D_\Phi(p,u)+D_\Phi(u,q)$.
\end{proof}

To derive \eqref{eq:bregman_contract_main},
we apply Lemma~\ref{lem:bregman_pythag} with $C=\mathcal{C}_\rho$, $q=Q_\theta$, and $u=Q^\star$.
For any $P\in\mathcal{C}_\rho$,
\[
D_\Phi(P,Q_\theta)\ \ge\ D_\Phi(P,Q^\star)+D_\Phi(Q^\star,Q_\theta)\ \ge\ D_\Phi(P,Q^\star),
\]
which gives $D_\Phi(P,Q^\star)\le D_\Phi(P,Q_\theta)$, i.e. \eqref{eq:bregman_contract_main}.

Finally, note that on the probability simplex, forward KL divergence $\mathrm{KL}(p\|q)$ is the Bregman divergence generated by
$\Phi(p)=\sum_i p_i\log p_i$ (negative Shannon entropy).
Therefore the above I-projection guarantee applies to the KL-divergence.
\qed

\section{Empirical-support entropy projection: VNE and RKE}
\label{app:empirical_support_projection}

This appendix provides the technical details for the empirical-support post-hoc projection described in Section~\ref{sec:practical_projection}. The goal is to reweight a fixed set of $M$ generated samples to satisfy a target diversity level while staying close (in a Hilbertian geometry) to the baseline uniform empirical distribution.

\subsection{Setup: reweighted empirical measures and kernel geometry}
Let $\{x_i\}_{i=1}^M\subset\mathcal{X}$ be samples generated from a trained model distribution $Q_\theta$. For any weight vector $q\in\Delta_M:=\{q\ge 0,\ \sum_{i=1}^M q_i=1\}$ define the reweighted empirical measure
\[
Q_q := \sum_{i=1}^M q_i\,\delta_{x_i},
\qquad
q_0:=\tfrac{1}{M}\mathbf{1}.
\]
Let $k$ be a positive semidefinite kernel with Gram matrix $K\in\mathbb{R}^{M\times M}$, $K_{ij}=k(x_i,x_j)$.

\paragraph{Squared MMD/KD on a fixed support.}
For two weighted empirical measures $Q_q$ and $Q_{q'}$ supported on the same points $\{x_i\}$, the squared MMD (equivalently KD in our notation) with kernel $k$ satisfies
\begin{equation}\label{eq:app_kd_quad}
\mathrm{KD}^2(Q_q,Q_{q'})=\sum_{i,j}(q_i-q'_i)(q_j-q'_j)k(x_i,x_j)=(q-q')^\top K (q-q').
\end{equation}
In particular, $\mathrm{KD}^2(Q_q,Q_{q_0})=(q-q_0)^\top K(q-q_0)$ is a convex quadratic function of $q$.

\subsection{VNE (log-Vendi) on empirical support}

Let $k(x,y)=\langle\phi(x),\phi(y)\rangle$ be represented by a feature map $\phi$ into a (possibly infinite-dimensional) Hilbert space. Define the empirical-support covariance operator
\begin{equation}\label{eq:app_cov_op}
C(q):=\sum_{i=1}^M q_i\,\phi(x_i)\phi(x_i)^\top.
\end{equation}
Assume the kernel is normalized on the support: $k(x_i,x_i)=\|\phi(x_i)\|^2=1$ for all $i$. Then $\mathrm{tr}(C(q))=\sum_i q_i=1$.

\paragraph{Empirical-support VNE.}
Define the von Neumann entropy
\begin{equation}\label{eq:app_vne_def}
H_{\mathrm{VNE}}(q):=-\mathrm{tr}\!\bigl(C(q)\log C(q)\bigr),
\end{equation}
with the convention $0\log 0:=0$.

\begin{proposition}[Concavity of empirical-support VNE in $q$]\label{prop:app_vne_concavity}
Under the above normalization, $q\mapsto H_{\mathrm{VNE}}(q)$ is concave on $\Delta_M$. Consequently, for any $\rho\in\mathbb{R}$, the following superlevel set $\mathcal{C}^{\mathrm{VNE}}_\rho$ is a convex set:
\[
\mathcal{C}^{\mathrm{VNE}}_\rho := \{q\in\Delta_M:\ H_{\mathrm{VNE}}(q)\ge \rho\}
\]
\end{proposition}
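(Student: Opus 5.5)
The plan is to write $H_{\mathrm{VNE}}$ as the composition of the von Neumann entropy with an affine map, and to get concavity from that composition. The map $q\mapsto C(q)=\sum_{i=1}^M q_i\,\phi(x_i)\phi(x_i)^\top$ is linear in $q$. Under the normalization $k(x_i,x_i)=1$, it sends $\Delta_M$ into the set of positive semidefinite, unit-trace operators (density operators) on the finite-dimensional subspace $V:=\mathrm{span}\{\phi(x_1),\ldots,\phi(x_M)\}$, which has dimension at most $M$. Positivity holds because each term is a nonnegative multiple of a rank-one PSD operator. Unit trace holds because $\mathrm{tr}(C(q))=\sum_i q_i\|\phi(x_i)\|^2=1$.

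Every $C(q)$ vanishes on $V^\perp$ and maps $V$ into itself, and von Neumann entropy depends only on the nonzero spectrum (with $0\log 0=0$). So we may identify $C(q)$ with its restriction to $V$, and the possibly infinite-dimensional feature space plays no role. It also helps to note that $C(q)$ has the same nonzero spectrum as $\mathrm{diag}(q)^{1/2}K\,\mathrm{diag}(q)^{1/2}$. This confirms that the definition here matches the one used in \eqref{eq:main_posthoc_projection}.

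The key step is to invoke concavity of von Neumann entropy on finite-dimensional density matrices. This is the same fact already used in the proof of Proposition~\ref{prop:monotone_logvendi}. If a self-contained justification is wanted, one writes $H(\rho)=\mathrm{tr}\,f(\rho)$ with $f(t)=-t\log t$, which is concave on $[0,\infty)$. Concavity of $H$ then follows from the standard fact that $\rho\mapsto\mathrm{tr}\,f(\rho)$ is concave whenever $f$ is concave. This can be seen by taking an eigenbasis $\{e_k\}$ of $\rho_\alpha=\alpha\rho_1+(1-\alpha)\rho_2$ and applying two inequalities. First, for each $k$, scalar concavity of $f$ gives $f(\langle e_k,\rho_\alpha e_k\rangle)\ge \alpha f(\langle e_k,\rho_1e_k\rangle)+(1-\alpha)f(\langle e_k,\rho_2e_k\rangle)$. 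Second, Peierls' inequality, $\sum_k f(\langle e_k,\rho e_k\rangle)\ge \mathrm{tr}\,f(\rho)$ for concave $f$, which in turn follows from Jensen's inequality applied to the spectral measure of $\rho$ in the direction $e_k$. This step is the only nontrivial one, and I expect it to be the main point to state carefully, but it is classical.

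Composing, for $q_1,q_2\in\Delta_M$ and $\alpha\in[0,1]$ we get
\[
H_{\mathrm{VNE}}(\alpha q_1+(1-\alpha)q_2)=H\bigl(\alpha C(q_1)+(1-\alpha)C(q_2)\bigr)\ge \alpha H_{\mathrm{VNE}}(q_1)+(1-\alpha)H_{\mathrm{VNE}}(q_2).
\]
Hence $H_{\mathrm{VNE}}$ is concave on the convex set $\Delta_M$. Convexity of $\mathcal{C}^{\mathrm{VNE}}_\rho$ is then exactly Lemma~\ref{lem:superlevel_convex}, applied with $\mathcal{P}=\Delta_M$ and $H=H_{\mathrm{VNE}}$.
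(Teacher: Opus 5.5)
Your proof is correct, and its outer structure is the paper's: $q\mapsto C(q)$ is an affine map into density operators, von Neumann entropy is concave, and Lemma~\ref{lem:superlevel_convex} finishes the argument. The two proofs differ in how they establish concavity of $S(X)=-\mathrm{tr}(X\log X)$.

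The paper uses the Gibbs variational principle. Nonnegativity of $D(X\|\sigma_H)$, with $\sigma_H=e^{H}/\mathrm{tr}(e^{H})$, yields $S(X)=\inf_{H}\{\log\mathrm{tr}(e^{H})-\mathrm{tr}(XH)\}$, which is an infimum of affine functions of $X$. You instead prove the general fact that $\rho\mapsto\mathrm{tr}\,f(\rho)$ is concave for any concave scalar $f$. You do this through an eigenbasis of the mixture, pointwise concavity of $f$, and Peierls' inequality, which is just Jensen over the weights $|\langle e_k,u_j\rangle|^2$.

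The paper's route is shorter once Klein's inequality $D\ge 0$ is granted, and it exhibits the entropy as a dual (log-partition) object. However, Klein's inequality is itself usually proved by an eigenbasis argument of the same kind, so yours is the more self-contained proof.

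Yours is also cleaner at the boundary and in infinite dimensions:
\begin{itemize}
\item Singular $X$ is common here, since $\mathrm{rank}\,C(q)\le M$. For such $X$, the paper's claim that equality is attained at $\sigma_H=X$ holds only in a limit, because $\sigma_H$ is always full rank.
\item $\mathrm{tr}(e^{H})$ is finite only in finite dimensions. Your explicit restriction to the common subspace $V=\mathrm{span}\{\phi(x_1),\ldots,\phi(x_M)\}$ makes this precise, whereas the paper leaves it implicit.
\end{itemize}

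A small bonus of your lemma: its convex counterpart with $f(t)=t^2$ gives convexity of $\mathrm{tr}\,C(q)^2=q^\top\widetilde K q$, the inverse-RKE, without the Schur product theorem.
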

\begin{proof}
To prove the statement, let $\mathcal{D}:=\{X\succeq 0:\mathrm{tr}(X)=1\}$ and define $S(X):=-\mathrm{tr}(X\log X)$ on $\mathcal{D}$.
We show $S$ is concave on $\mathcal{D}$ via a variational identity.

For any Hermitian $H$, define $\sigma_H := e^{H}/\mathrm{tr}(e^{H})$. For $X\in\mathcal{D}$, the quantum relative entropy
$D(X\|\sigma_H):=\mathrm{tr}(X(\log X-\log\sigma_H))$ satisfies $D(X\|\sigma_H)\ge 0$.
Using $\log\sigma_H = H - \log\mathrm{tr}(e^{H})$, we obtain
\[
D(X\|\sigma_H)=\mathrm{tr}(X\log X)-\mathrm{tr}(XH)+\log\mathrm{tr}(e^{H})\ge 0,
\]
hence $S(X)\le \log\mathrm{tr}(e^{H})-\mathrm{tr}(XH)$ for all $H$. Equality is achieved by taking $\sigma_H=X$
(equivalently $H=\log X$ on the support of $X$), yielding
\[
S(X)=\inf_{H=H^\top}\bigl\{\log\mathrm{tr}(e^{H})-\mathrm{tr}(XH)\bigr\}.
\]
For each fixed $H$, the map $X\mapsto \log\mathrm{tr}(e^{H})-\mathrm{tr}(XH)$ is affine; an infimum of affine functions is concave, hence $S$ is concave on $\mathcal{D}$.

Since $q\mapsto C(q)$ is affine by \eqref{eq:app_cov_op} and $C(q)\in\mathcal{D}$ for all $q\in\Delta_M$ by normalization, the composition $q\mapsto S(C(q))=H_{\mathrm{VNE}}(q)$ is concave. Convexity of the superlevel set follows.
\end{proof}

\paragraph{Gradient with respect to $q$.}
Differentiating through $C(q)$ gives the Fr\'echet differential identity
\begin{equation}\label{eq:app_vne_diff}
\mathrm{d}H_{\mathrm{VNE}}(q)
=
-\mathrm{tr}\!\bigl((\log C(q)+I)\,\mathrm{d}C(q)\bigr),
\end{equation}
and since $\partial C(q)/\partial q_i=\phi(x_i)\phi(x_i)^\top$,
\begin{equation}\label{eq:app_vne_grad}
\frac{\partial}{\partial q_i}H_{\mathrm{VNE}}(q)
=
-\bigl\langle \phi(x_i),(\log C(q)+I)\phi(x_i)\bigr\rangle,
\qquad i\in[M].
\end{equation}
In practice, $\log C(q)$ is computed on the support of $C(q)$ (equivalently using a small spectral floor), which is standard for stable entropy computations.

\paragraph{Finite-dimensional evaluation via weighted Gram matrices.}
Let $K=\Phi\Phi^\top$ for a (possibly implicit) feature matrix $\Phi$ with rows $\phi(x_i)^\top$. Define
\begin{equation}\label{eq:app_Aq}
A(q):=\mathrm{diag}(\sqrt{q})\,K\,\mathrm{diag}(\sqrt{q}).
\end{equation}
Then $A(q)$ and $C(q)$ share the same nonzero eigenvalues, so $H_{\mathrm{VNE}}(q)$ can be computed from an eigendecomposition of $A(q)$, and \eqref{eq:app_vne_grad} can be evaluated using the corresponding eigenbasis.

\subsection{Empirical-support VNE projection problem and solver}

We define the empirical-support VNE projection as
\begin{equation}\label{eq:app_vne_proj}
\min_{q\in\Delta_M}\ (q-q_0)^\top K (q-q_0)
\qquad\text{s.t.}\qquad
H_{\mathrm{VNE}}(q)\ge \rho.
\end{equation}
By Proposition~\ref{prop:app_vne_concavity} and PSD-ness of $K$, this is a convex optimization problem over the simplex.

A practical solver is primal--dual KL-mirror descent (exponentiated gradient) on the Lagrangian
\[
\mathcal{F}(q,\lambda)=(q-q_0)^\top K(q-q_0)+\lambda(\rho-H_{\mathrm{VNE}}(q)),\qquad \lambda\ge 0.
\]
The primal gradient is
\[
\nabla_q\mathcal{F}(q,\lambda)=2K(q-q_0)-\lambda\nabla_q H_{\mathrm{VNE}}(q),
\]
with $\nabla_q H_{\mathrm{VNE}}(q)$ from \eqref{eq:app_vne_grad}.

\begin{algorithm}[t]
\caption{Empirical-support VNE projection (primal--dual exponentiated gradient)}
\label{alg:app_vne_proj}
\KwIn{$K$; target $\rho$; stepsizes $\eta,\gamma>0$; iterations $T$}
\KwOut{$q_T\in\Delta_M$}
Initialize $q_0\leftarrow \frac{1}{M}\mathbf{1}$ and $\lambda_0\leftarrow 0$\;
\For{$t=0,1,\dots,T-1$}{
Compute $H_{\mathrm{VNE}}(q_t)$ and $\nabla_q H_{\mathrm{VNE}}(q_t)$ using an eigendecomposition of $A(q_t)=\mathrm{diag}(\sqrt{q_t})K\mathrm{diag}(\sqrt{q_t})$\;
$g_t \leftarrow 2K(q_t-q_0)-\lambda_t\,\nabla_q H_{\mathrm{VNE}}(q_t)$\;
$\tilde q_{t+1,i}\leftarrow q_{t,i}\exp(-\eta\,g_{t,i})$ for all $i$;\quad $q_{t+1}\leftarrow \tilde q_{t+1}/\sum_j \tilde q_{t+1,j}$\;
$\lambda_{t+1}\leftarrow \bigl[\lambda_t+\gamma(\rho-H_{\mathrm{VNE}}(q_{t+1}))\bigr]_+$\;
}
\Return{$q_T$}\;
\end{algorithm}

\subsection{Order-2 (RKE) empirical-support projection}

Define the squared-kernel Gram matrix
\[
\widetilde K_{ij}:=k(x_i,x_j)^2,
\qquad
\widetilde K:=K\circ K,
\]
where $\circ$ denotes the Hadamard product. For $Q_q=\sum_i q_i\delta_{x_i}$,
\begin{equation}\label{eq:app_irke_q}
\mathrm{Inverse\text{-}RKE}(Q_q)
=
\mathbb{E}_{X,X'\sim Q_q}[k(X,X')^2]
=
\sum_{i,j}q_i q_j k(x_i,x_j)^2
=
q^\top \widetilde K q.
\end{equation}
Since $K\succeq 0$ implies $\widetilde K=K\circ K\succeq 0$ (Schur product theorem), $q\mapsto q^\top \widetilde K q$ is convex on $\Delta_M$.

A convenient penalized formulation is the convex quadratic program
\begin{equation}\label{eq:app_rke_pen}
\min_{q\in\Delta_M}\ (q-q_0)^\top K(q-q_0)+\lambda\,q^\top \widetilde K q,
\qquad \lambda>0.
\end{equation}
The gradient is explicit:
\[
\nabla_q\Bigl((q-q_0)^\top K(q-q_0)+\lambda\,q^\top \widetilde K q\Bigr)=2K(q-q_0)+2\lambda \widetilde K q,
\]
so exponentiated-gradient updates yield a simple solver.

\begin{algorithm}[t]
\caption{Empirical-support order-2 projection (exponentiated gradient)}
\label{alg:app_rke_proj}
\KwIn{$K$; penalty $\lambda>0$; stepsize $\eta>0$; iterations $T$}
\KwOut{$q_T\in\Delta_M$}
Compute $\widetilde K\leftarrow K\circ K$ and initialize $q_0\leftarrow \frac{1}{M}\mathbf{1}$\;
\For{$t=0,1,\dots,T-1$}{
$g_t \leftarrow 2K(q_t-q_0)+2\lambda\,\widetilde K q_t$\;
$\tilde q_{t+1,i}\leftarrow q_{t,i}\exp(-\eta\,g_{t,i})$ for all $i$;\quad
$q_{t+1}\leftarrow \tilde q_{t+1}/\sum_j \tilde q_{t+1,j}$\;
}
\Return{$q_T$}\;
\end{algorithm}

\section{Additional Numerical Results}

\begin{figure}
    \centering
    \includegraphics[width=\linewidth]{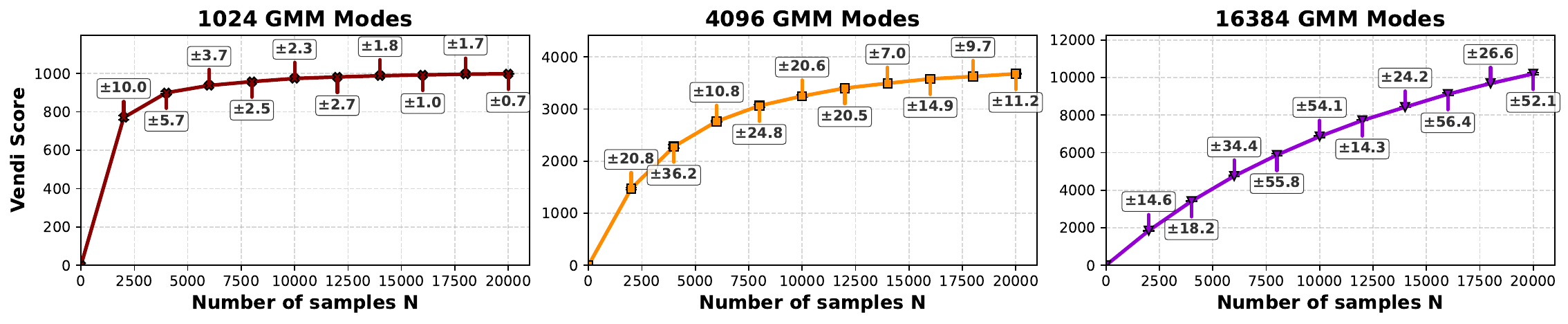}
    \caption{Vendi Score convergence for $d$-dimensional Gaussian Mixtures. The score is evaluated across varying sample sizes $n$ for $2^d$ isotropic Gaussians ($\sigma_{\text{std}} = 10^{-4}$) centered on the vertices of a $d$-dimensional cube. We set Gaussian kernel bandwidth $\sigma$ to 0.1. Error bars represent 95\% confidence intervals calculated over 10 independent trials.}
    \label{fig:gaussian mixture bias}
\end{figure}
\subsection{Extended analysis of Vendi score convergence on empirical distributions}

Building on the results in Figure~\ref{fig:ci-dinov2-sigma35}, we extend our analysis to AFHQ-v2~\cite{choi2020starganv2} and LSUN-Bedroom~\cite{yu2015lsun}, using parameters identical to those in the main text. Note that for AFHQ-v2, which contains only 15,803 images, we do not report a confidence interval for the final data point. We observe that regardless of the chosen dataset, the increasing trend remains. 

\begin{figure}[ht]
    \centering
    \includegraphics[width=\textwidth]{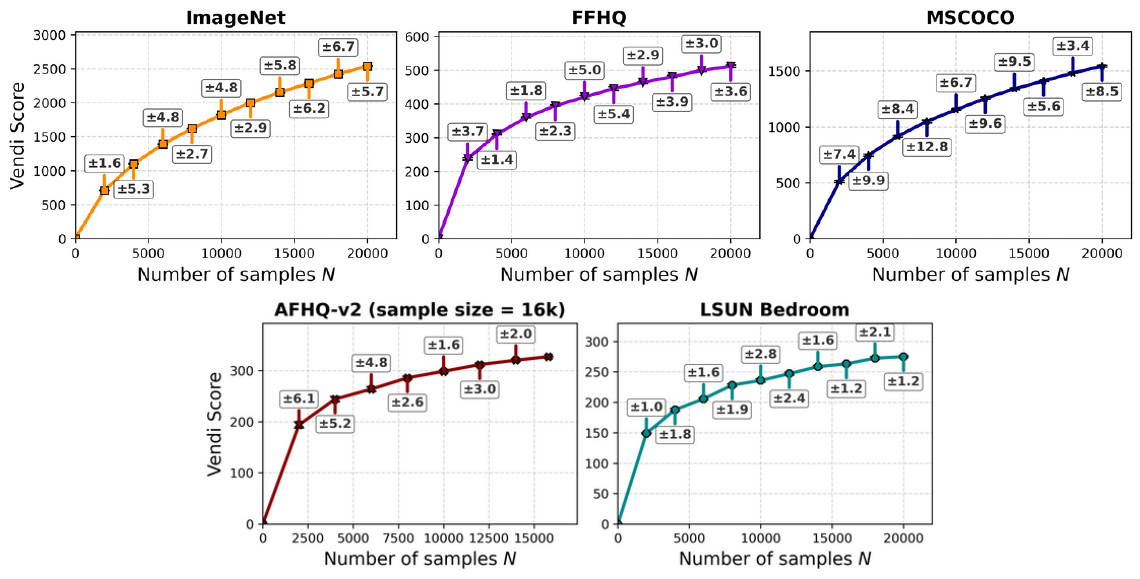}
    \caption{Vendi score curves (mean and 95\% confidence intervals over 5 independent sample sets) for ImageNet, FFHQ, MSCOCO, LSUN Bedroom (sample size values of $n\le 20\text{K}$) and AFHQ-v2 (sample size values $n \leq 16000$), computed using DINOv2 embeddings and a Gaussian (RBF) kernel with bandwidth $\sigma=35$.}
    \label{fig:all-ci-datasets}
\end{figure}

\subsection{Extended analysis of generative models}

In this section we report more results on comparing generative models to the underlying training set. We provide results for a variety of architectures across different embedding models.

\subsubsection{The effect of chosen embeddings on ranking of generative models}

Extending the comparison in Figure~\ref{fig:models-comparison-2x2}, we evaluated the generative models across four distinct embedding architectures: DINOv2~\cite{oquab_dinov2_2023}, DINOv3~\cite{siméoni2025dinov3}, CLIP~\cite{radford_learning_2021}, and InceptionV3~\cite{szegedy2016inception}. We observe that for all embeddings, the diversity of the real dataset consistently matches or exceeds that of the generated samples. These results are detailed in Figures~\ref{fig:dinov2-variety-datasets}--\ref{fig:dinov3-variety-datasets}.

\begin{figure}[htb]
    \centering
    \subfigure[ImageNet]{\includegraphics[width=0.85\textwidth]{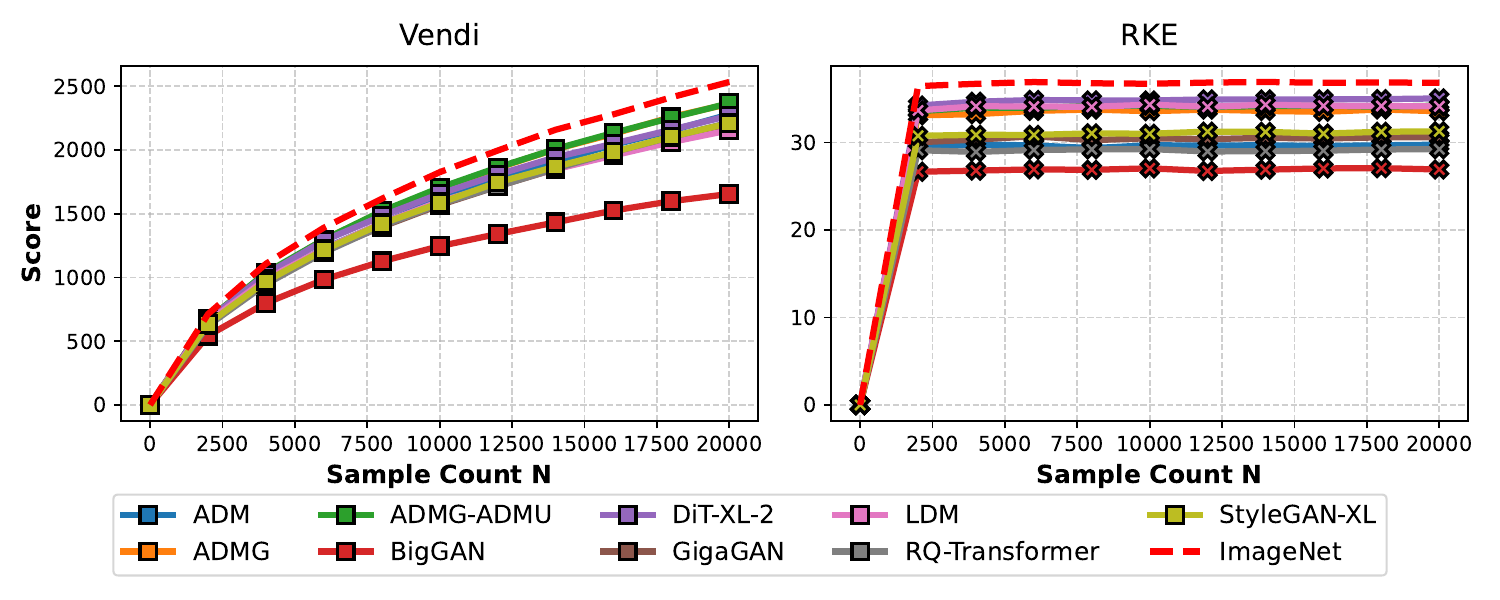}}
    \subfigure[FFHQ]{\includegraphics[width=0.85\textwidth]{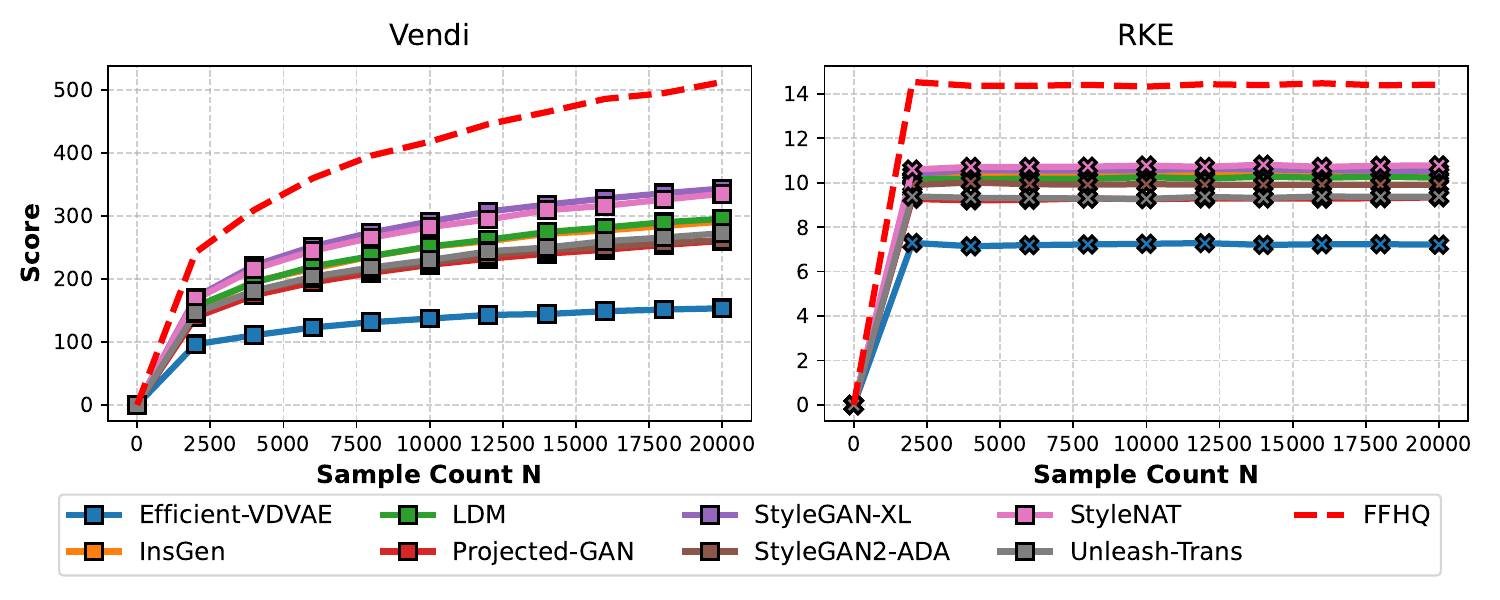}}
    \subfigure[LSUN Bedroom]{\includegraphics[width=0.85\textwidth]{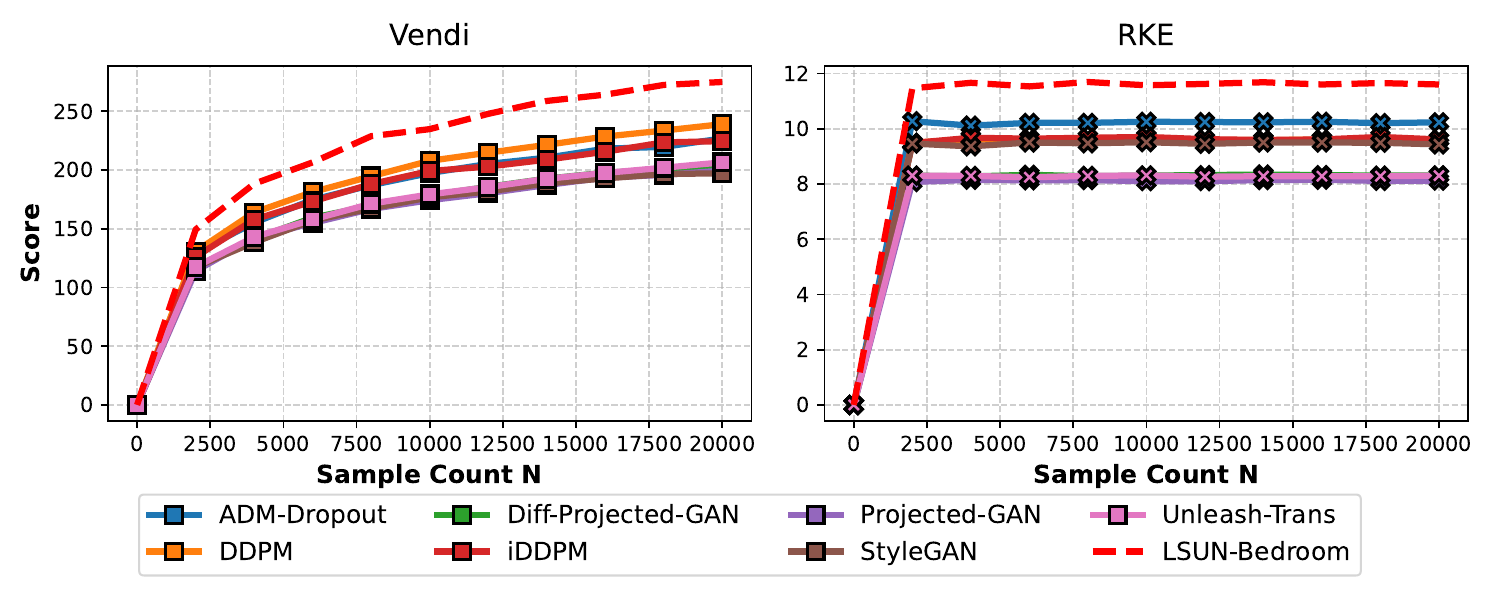}}
    \caption{Comparison of Vendi scores of the test sample set (the dashed red curve) and the generated samples by pre-trained generative models across three datasets. The backbone embedding is DINOv2 embeddings using Gaussian kernel with bandwidth $\sigma=35$.}
    \label{fig:dinov2-variety-datasets}
\end{figure}

\begin{figure}[ht]
    \centering
    \subfigure[ImageNet]{\includegraphics[width=0.85\textwidth]{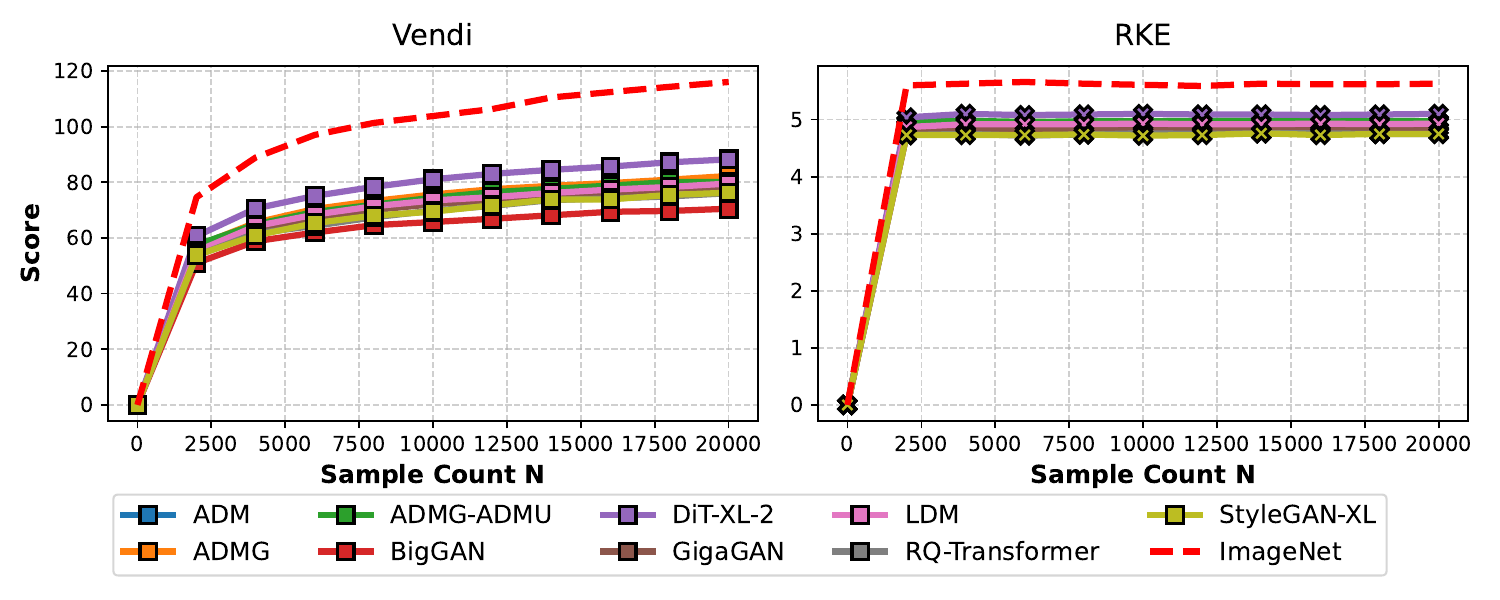}}
    \subfigure[FFHQ]{\includegraphics[width=0.85\textwidth]{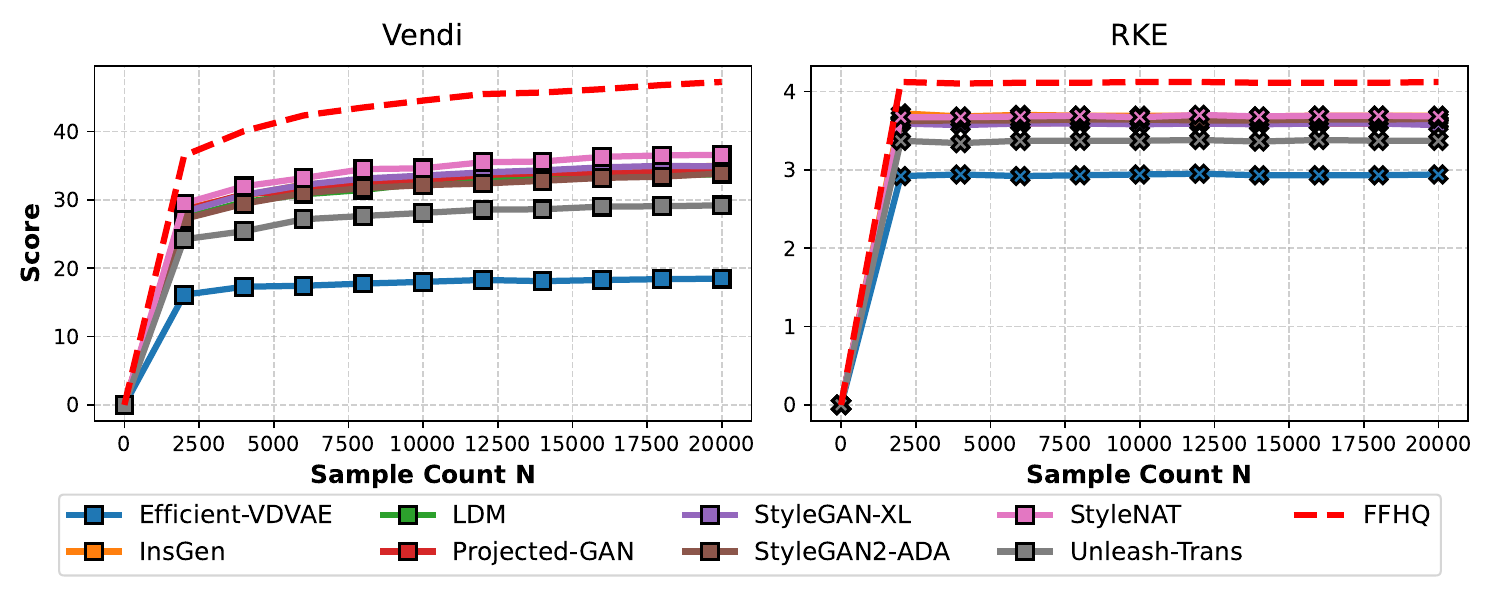}}
    \subfigure[LSUN Bedroom]{\includegraphics[width=0.85\textwidth]{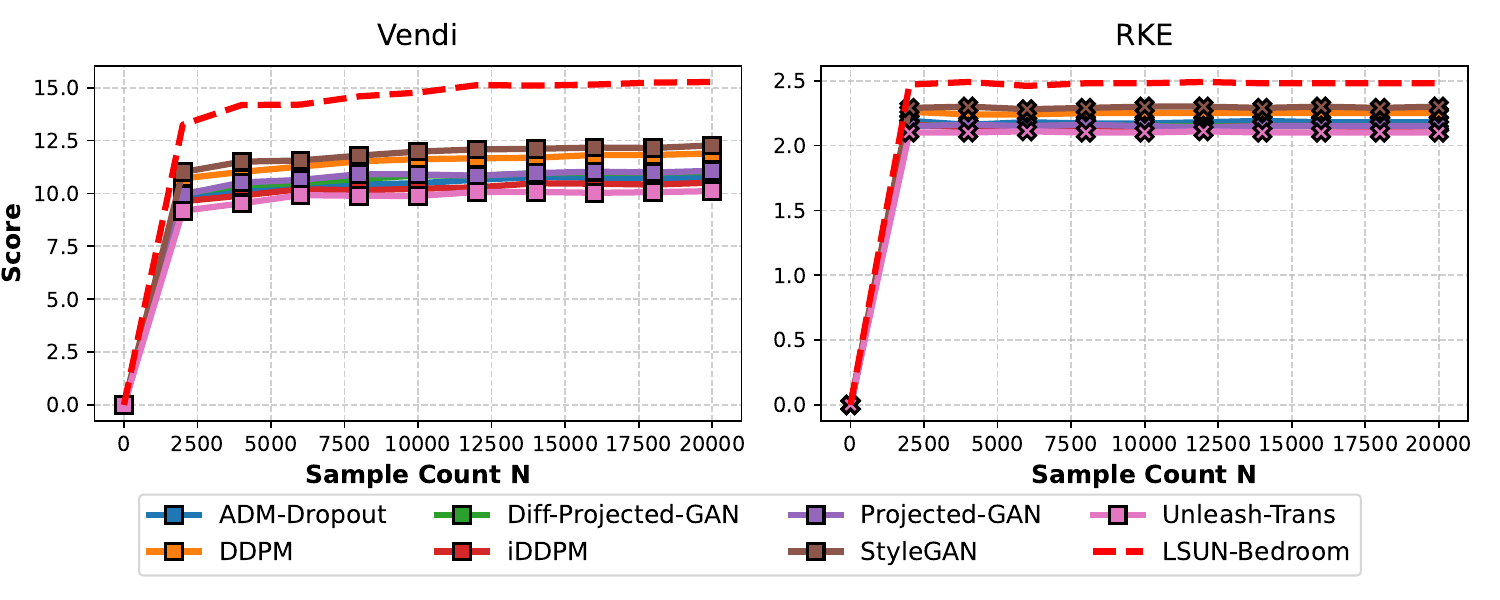}}
    \caption{Comparison of Vendi scores of the test sample set (the dashed red curve) and the generated samples by pre-trained generative models across three datasets. The backbone embedding is CLIP embeddings using Gaussian kernel with bandwidth $\sigma=8.0$.}
    \label{fig:clip-variety-datasets}
\end{figure}

\begin{figure}[ht]
    \centering
    \subfigure[ImageNet]{\includegraphics[width=0.85\textwidth]{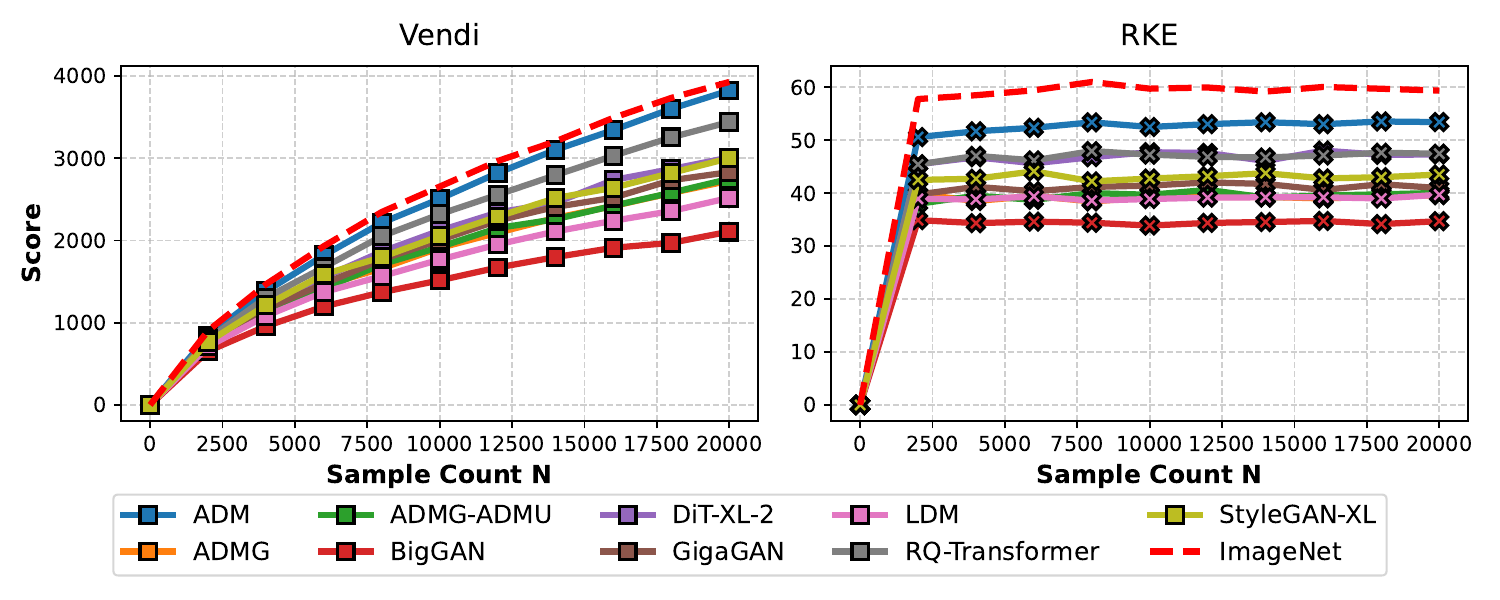}}
    \subfigure[FFHQ]{\includegraphics[width=0.85\textwidth]{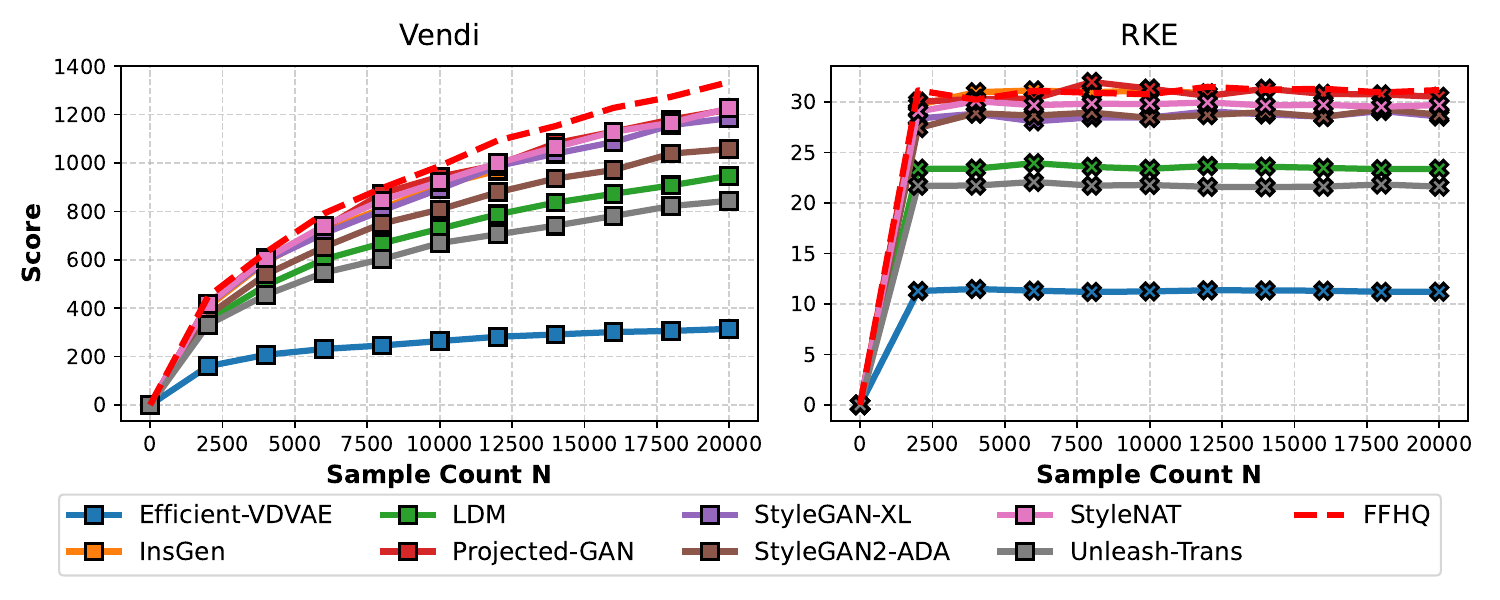}}
    \subfigure[LSUN Bedroom]{\includegraphics[width=0.85\textwidth]{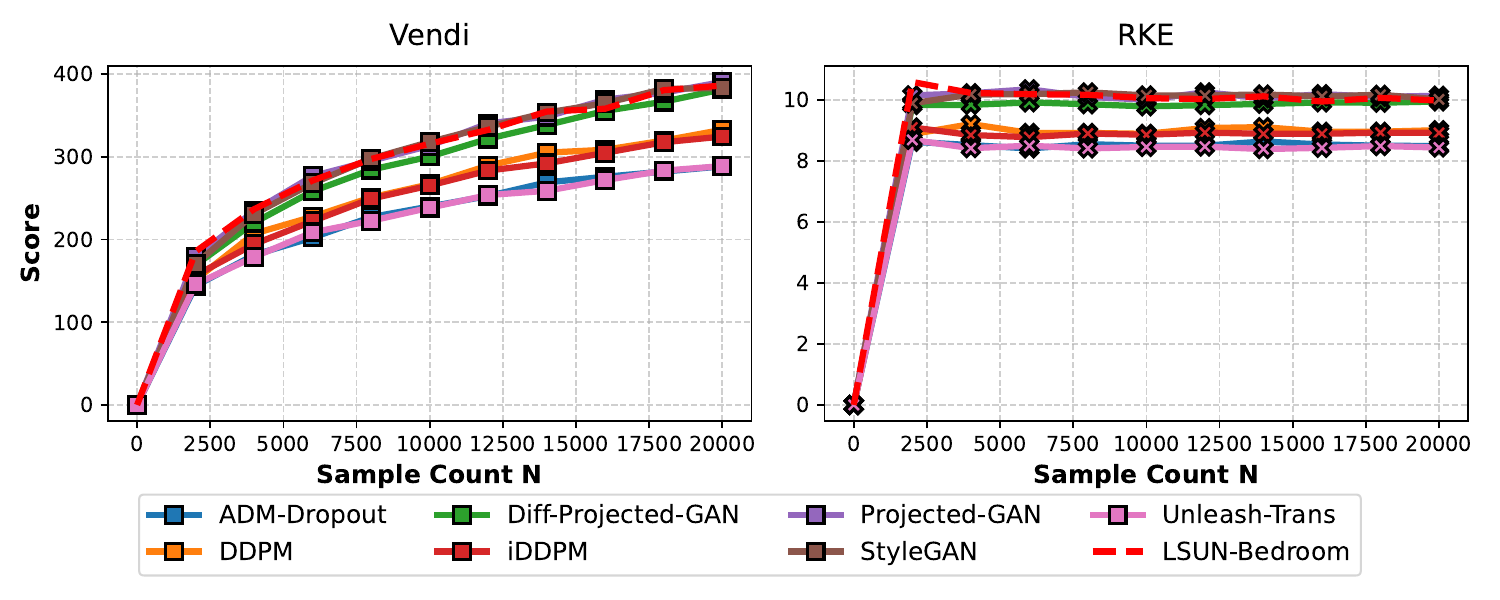}}
    \caption{Comparison of Vendi scores of the test sample set (the dashed red curve) and the generated samples by pre-trained generative models across three datasets. The backbone embedding is InceptionV3 embeddings using Gaussian kernel with bandwidth $\sigma=9.0$.}
    \label{fig:inception-variety-datasets}
\end{figure}

\begin{figure}[ht]
    \centering
    \subfigure[ImageNet]{\includegraphics[width=0.85\textwidth]{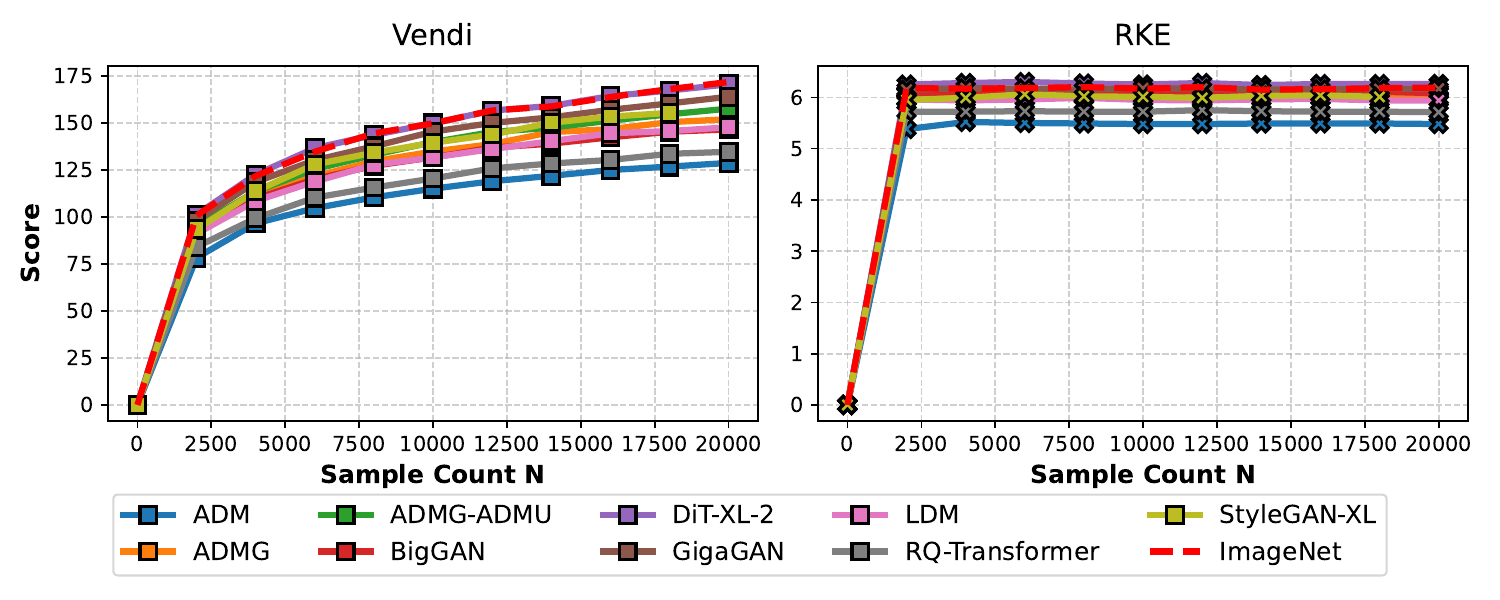}}
    \subfigure[FFHQ]{\includegraphics[width=0.85\textwidth]{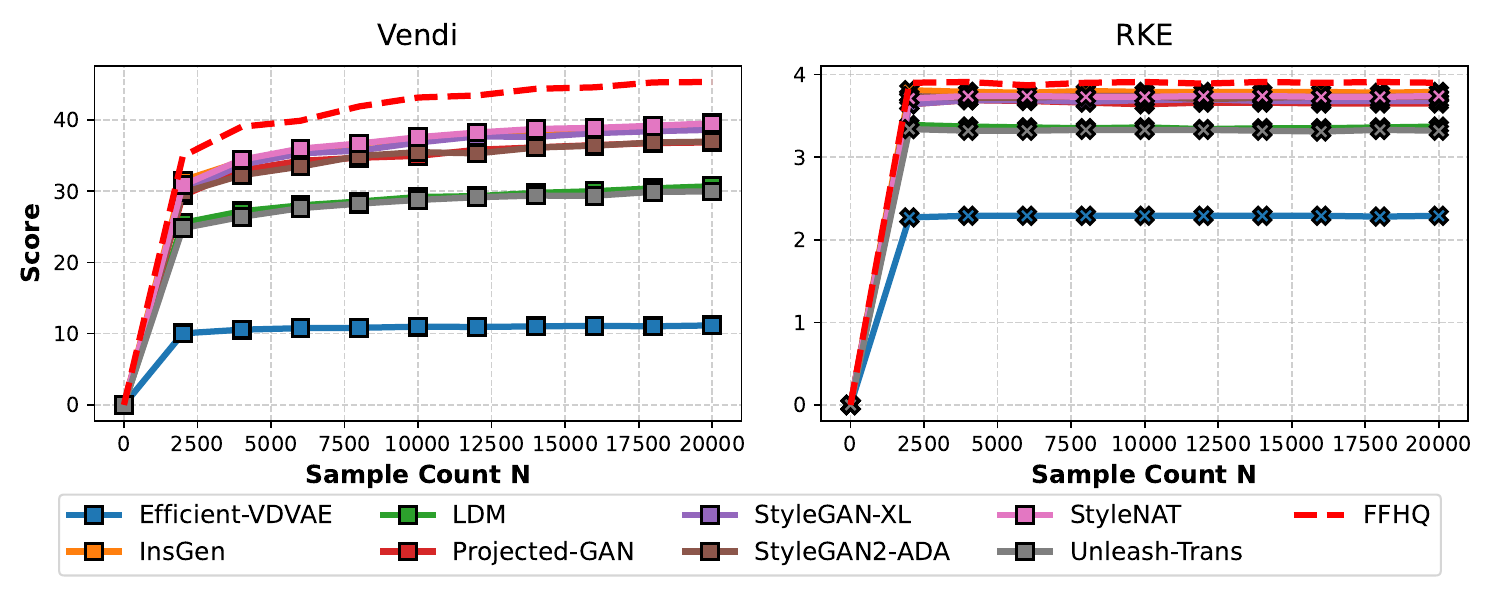}}
    \subfigure[LSUN Bedroom]{\includegraphics[width=0.85\textwidth]{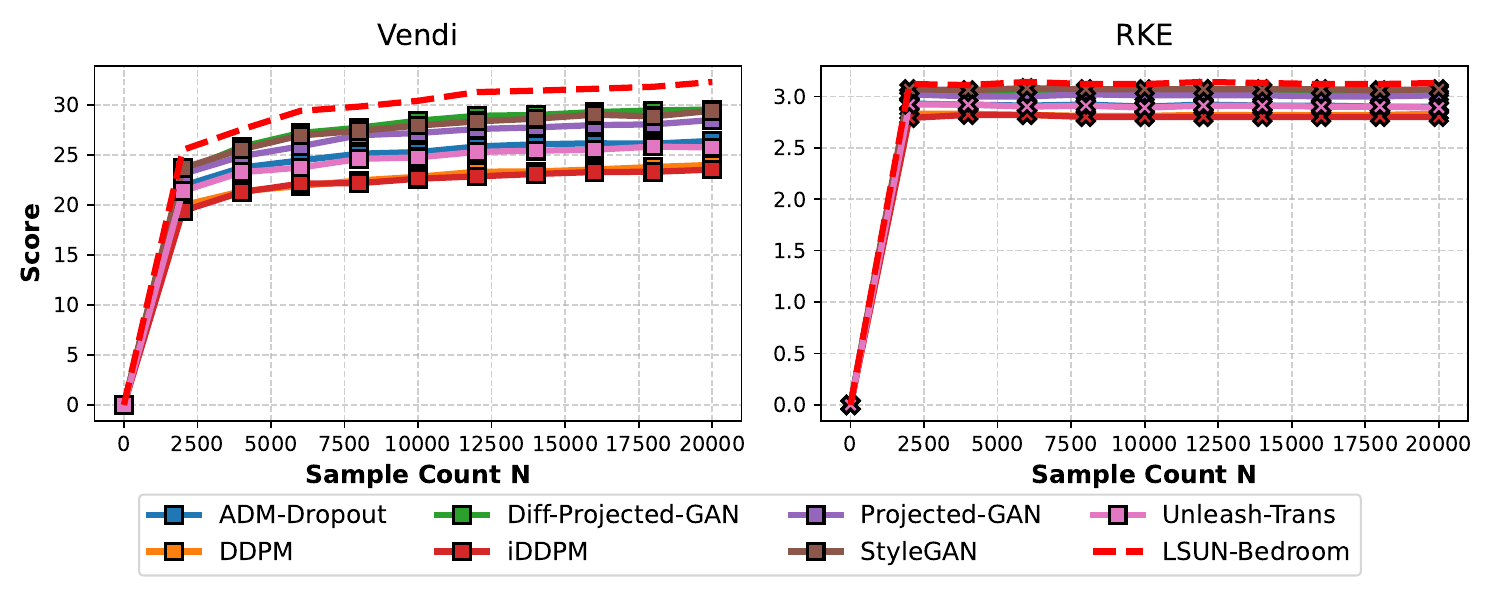}}
    \caption{Comparison of Vendi scores of the test sample set (the dashed red curve) and the generated samples by pre-trained generative models across three datasets. The backbone embedding is DINOv3-ViTS16 embeddings using Gaussian kernel with bandwidth $\sigma=8.0$.}
    \label{fig:dinov3-variety-datasets}
\end{figure}

\subsubsection{The effect of choosing a cosine similarity kernel}
To provide a comprehensive analysis of the Vendi metric, we also evaluated diversity using the cosine similarity kernel. As established by \citet{ospanov2025vendi}, the convergence of this kernel scales with the dimension of the embedding model. We confirm that this behavior holds across all four datasets. As shown with Gaussian kernel the diversity of the training set consistently exceeds that of the generative models. The downward diversity bias of the generative holds even when we swap the underlying kernel function.

\begin{figure}[ht]
    \centering
    \includegraphics[width=\textwidth]{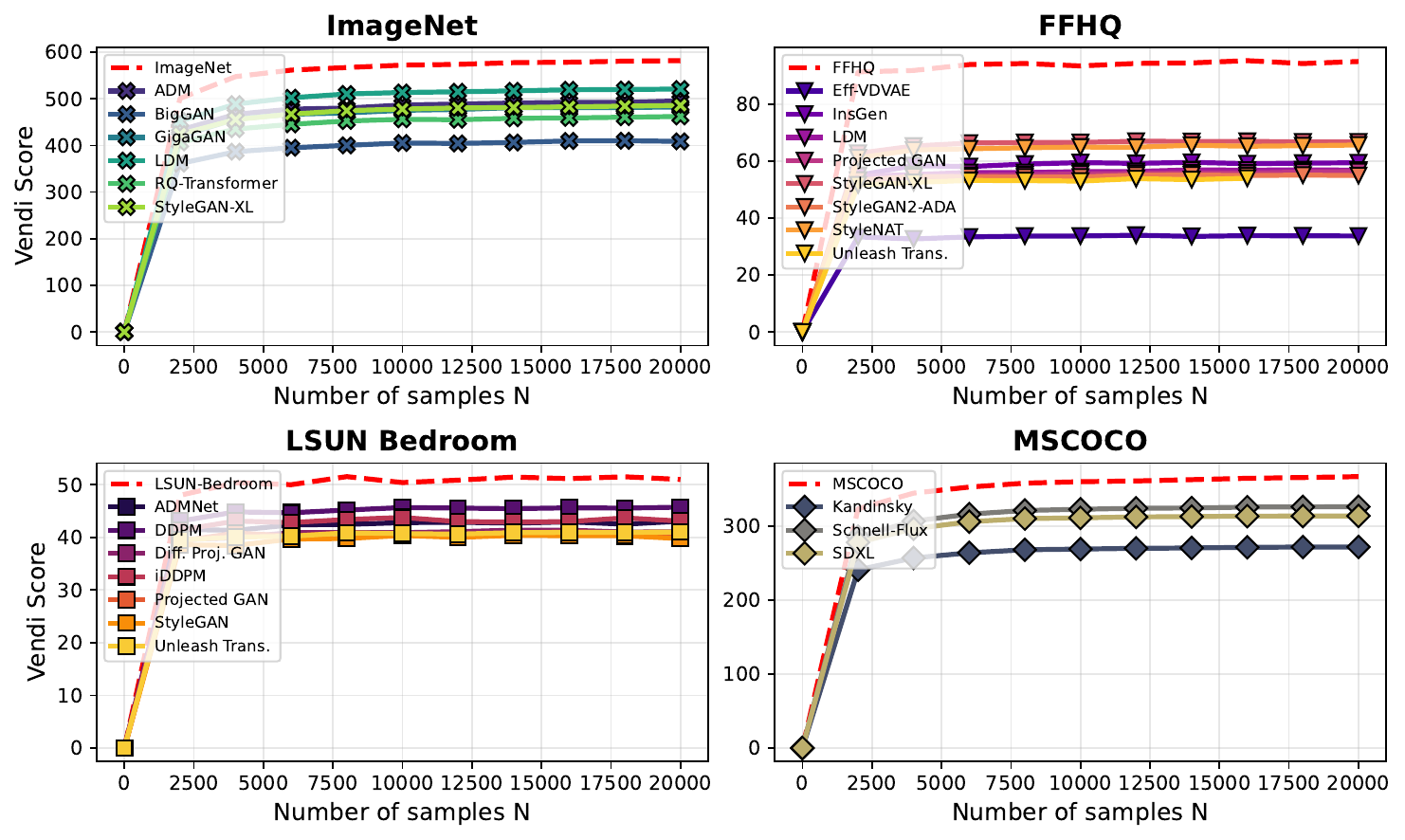}
    \caption{Comparison of generative model architectures across four datasets. We evaluate models on ImageNet, FFHQ, LSUN Bedroom, and MSCOCO using DINOv2 embeddings with cosine similarity kernel.}
    \label{fig:models-comparison-cosine-2x2}
\end{figure}

\subsubsection{Extending results to reference-based metrics (Recall and Coverage)}
To provide a comprehensive analysis of generative model diversity, we also report the reference-based Recall~\cite{kynkaanniemi_improved_2019} and Coverage~\cite{naeem2020reliable} metrics. As these metrics require a reference distribution, scores for the datasets themselves are omitted. For completeness, we present these results alongside the corresponding RKE and Vendi scores. We report scores in Tables~\ref{tab:imagenet_rke_vs_recall_coverage}, \ref{tab:ffhq_rke_vs_recall_coverage} and \ref{tab:lsun_rke_vs_recall_coverage}.

\begin{table}[ht]
\centering
\caption{Comparison of RKE, Coverage, and Recall across different model architectures trained on ImageNet. Coverage and Recall metrics were computed using \citet{stein_exposing_2023} implementation. Best model scores are underlined.}
\label{tab:imagenet_rke_vs_recall_coverage}
\begin{tabular}{lcccc}
\toprule
Method & Recall $\uparrow$ & Coverage $\uparrow$ & RKE $\uparrow$ & Vendi $\uparrow$\\
\midrule
Dataset & \textbf{-} & \textbf{-} & \textbf{36.82} & \textbf{2533.93}\\
\midrule
ADM \cite{dhariwal2021adm}  & \underline{0.79} & 0.89 & 29.74 & \underline{2279.73}\\
BigGAN \cite{brock_large_2018} & 0.44 & 0.57 & 26.91 & 1655.41\\
GigaGAN \cite{kang2023gigagan} & 0.74 & 0.70 & 30.59 & 2174.02\\
LDM \cite{rombach2022ldm} & 0.76 & 0.93 & \underline{34.13} & 2155.19\\
RQ-Transformer \cite{lee2022autoregressive} & 0.76 & 0.59 & 29.23 & 2214.47\\
StyleGAN-XL \cite{sauer2022styleganxl} & 0.74 & \underline{0.96} & 31.27 & 2209.83\\
\bottomrule
\end{tabular}
\end{table}

\begin{table}[ht]
\centering
\caption{Comparison of RKE, Coverage, and Recall across different model architectures trained on FFHQ. Coverage and Recall metrics were computed using \citet{stein_exposing_2023} implementation. Best model scores are underlined.}
\label{tab:ffhq_rke_vs_recall_coverage}
\begin{tabular}{lcccc}
\toprule
Method & Recall $\uparrow$ & Coverage $\uparrow$ & RKE $\uparrow$ & Vendi $\uparrow$\\
\midrule
Dataset & \textbf{-} & \textbf{-} & \textbf{14.46} & \textbf{512.84}\\
\midrule
LDM~\cite{rombach2022ldm}  & \underline{0.44} & \underline{0.74} & 10.21 & 295.47\\
Eff. vdVAE \cite{hazami2022efficientvdvae} & 0.14 & 0.52 & 7.21 & 153.45\\
InsGen \cite{yang2021insgen} & 0.14 & 0.51 & 10.38 & 291.21\\
Projected GAN \cite{sauer2021projected-gan}& 0.07 & 0.30 & 9.34 & 260.41\\
StyleGAN XL \cite{sauer2022styleganxl}& 0.42 & 0.61 & 10.53 & \underline{343.63}\\
StyleGAN2-Ada \cite{karras2020stylegan2ada}& 0.04 & 0.41 & 9.90 & 261.83\\
StyleNAT \cite{walton2022stylenat}& 0.42 & 0.71 & \underline{10.78} & 335.26\\
Unleashing Trans. \cite{bondtaylor2022unleashing}& 0.24 & 0.54 & 9.37 & 272.79\\
\bottomrule
\end{tabular}
\end{table}

\begin{table}[ht]
\centering
\caption{Comparison of RKE, Coverage, and Recall across different model architectures trained on LSUN-Bedroom. Coverage and Recall metrics were computed using \citet{stein_exposing_2023} implementation. Dataset scores are bolded, best model scores are underlined.}
\label{tab:lsun_rke_vs_recall_coverage}
\begin{tabular}{lcccc}
\toprule
Method & Recall $\uparrow$ & Coverage $\uparrow$ & RKE $\uparrow$ & Vendi $\uparrow$ \\
\midrule
Dataset & \textbf{-} & \textbf{-} & \textbf{11.60} & \textbf{274.90} \\
\midrule
ADMNet Dropout \cite{dhariwal2021adm}& \underline{0.75} & \underline{0.90} & \underline{10.23} & 226.85\\
DDPM \cite{ho2020ddpm}& 0.61 & 0.68 & 9.61 & \underline{239.00}\\
Diffusion-Projected GAN \cite{wang2022diffprojgan}& 0.28 & 0.28 & 8.31 & 204.06 \\
iDDPM \cite{nichol2021iddpm}& 0.64 & 0.76 & 9.61 & 224.55\\
Projected GAN \cite{sauer2021projected-gan}& 0.22 & 0.24 & 8.12 & 199.45 \\
StyleGAN \cite{karras2019style}& 0.41 & 0.69 & 9.43 & 197.21\\
Unleasing Trans. \cite{bondtaylor2022unleashing}& 0.41 & 0.42 & 8.30 & 206.47 \\
\bottomrule
\end{tabular}
\end{table}

\goodbreak\newpage
\subsection{Extended analysis of generative models trained on subsets of training data}

\subsubsection{Additional results on LDM, StyleGAN-XL and UNet based architectures}
Expanding on the findings in Figure~\ref{fig:stylegan-xl-train}, we evaluated two additional architectures: a small U-Net and a Latent Diffusion Model (LDM). The U-Net was trained on 100\%, 10\%, 2\%, and 1\% subsets of ImageNet, while the LDM was trained on 100\%, 50\%, and 25\% subsets of FFHQ, using standard technical configurations for each. Figures~\ref{fig:dinov2-all-trained-models} (DINOv2) and \ref{fig:clip-all-trained-models} (CLIP) confirm that the downward diversity bias persists across architectures and amplifies as the training set size decreases.

\begin{figure}
    \centering
    \includegraphics[width=0.95\textwidth]{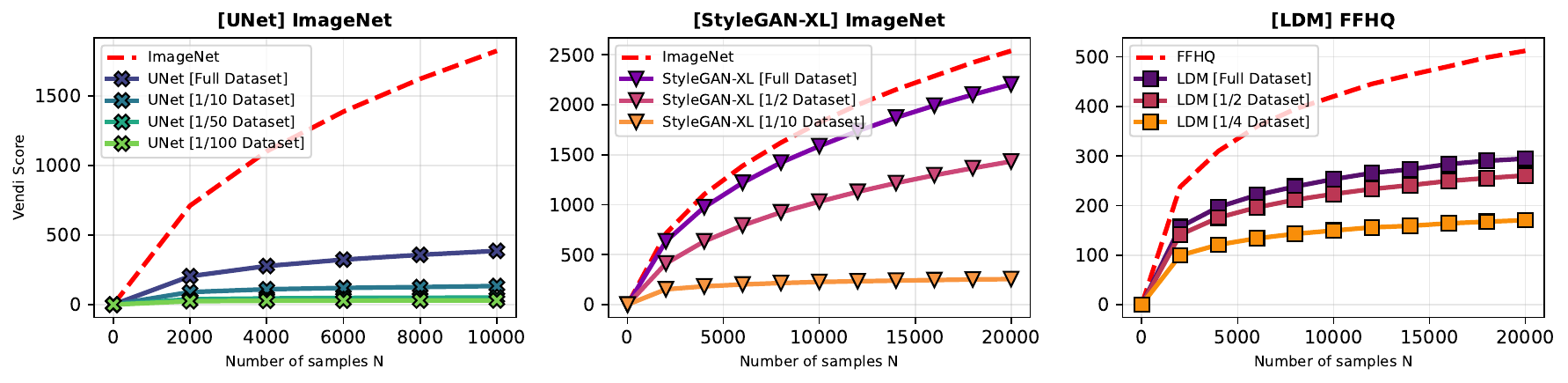}
    \caption{Vendi Score comparison for UNet, StyleGAN-XL, LDM. The models were trained on varying subsets of ImageNet, ImageNet, FFHQ respectively (indicated by fraction of the original size). Dashed red line represents score of the corresponding training dataset. The evaluation results are done using DINOv2 with $\sigma=35$.}
    \label{fig:dinov2-all-trained-models}
\end{figure}

\begin{figure}
    \centering
    \includegraphics[width=0.95\textwidth]{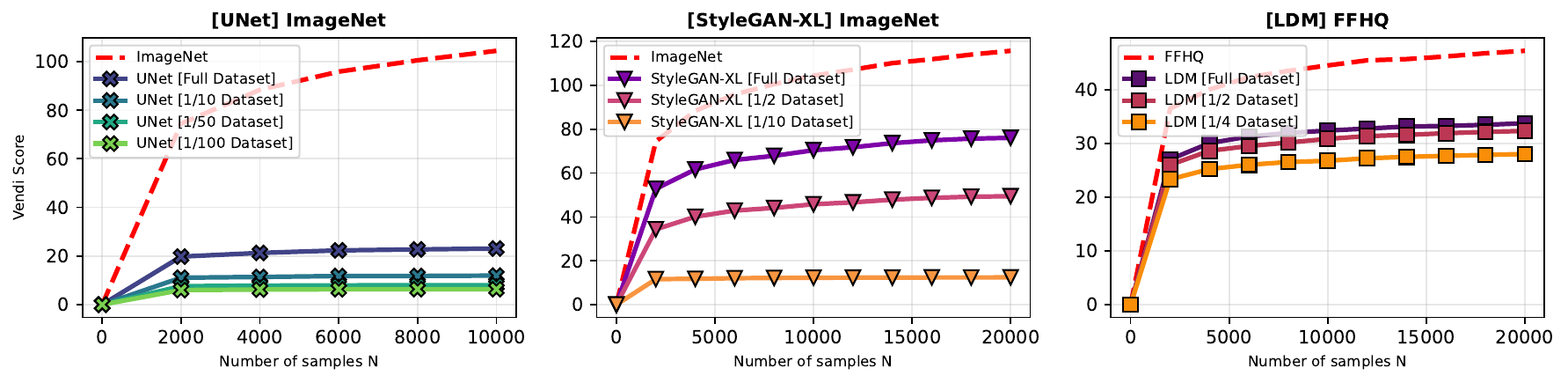}
    \caption{Vendi Score comparison for UNet, StyleGAN-XL, LDM. The models were trained on varying subsets of ImageNet, ImageNet, FFHQ respectively (indicated by fraction of the original size). Dashed red line represents score of the corresponding training dataset. The evaluation results are done using CLIP with $\sigma=8.0$.}
    \label{fig:clip-all-trained-models}
\end{figure}

\subsubsection{Extending results to reference-based metrics (Recall and Coverage)}

We further evaluated the LDM and StyleGAN-XL models using Recall, Coverage, RKE, and Vendi scores. The results align with our reference-free analysis, as both Recall and Coverage confirm that diversity degrades with smaller training set sizes.

\begin{table}[ht]
\centering
\caption{Comparison of RKE, Coverage, and Recall across StyleGAN-XL models trained on varying sizes of ImageNet.}
\label{tab:train_stylegan_rke_vs_recall_coverage}
\begin{tabular}{lcccc}
\toprule
Method & Recall $\uparrow$ & Coverage $\uparrow$ & RKE $\uparrow$ & Vendi $\uparrow$\\
\midrule
Dataset & \textbf{-} & \textbf{-} & \textbf{36.82} & \textbf{2533.93}\\
\midrule
StyleGAN-XL: full dataset & 0.74 & 0.96 & 31.27 & 2203.39\\
StyleGAN-XL: 1/2 dataset & 0.59 & 0.94 & 20.80 & 1432.20\\
StyleGAN-XL: 1/10 dataset & 0.42 & 0.93 & 14.95 & 255.44\\
\bottomrule
\end{tabular}
\end{table}

\begin{table}[ht]
\centering
\caption{Comparison of RKE, Coverage, and Recall across LDM models trained on varying sizes of FFHQ.}
\label{tab:train_ldm_rke_vs_recall_coverage}
\begin{tabular}{lcccc}
\toprule
Method & Recall $\uparrow$ & Coverage $\uparrow$ & RKE $\uparrow$ & Vendi $\uparrow$\\
\midrule
Dataset & \textbf{-} & \textbf{-} & \textbf{14.46} & \textbf{514.02}\\
\midrule
LDM: full dataset & 0.44 & 0.74 & 10.21 & 294.60\\
LDM: 1/2 dataset & 0.22 & 0.74 & 9.17 & 260.64\\
LDM: 1/4 dataset & 0.03 & 0.52 & 7.06 & 170.44\\
\bottomrule
\end{tabular}
\end{table}

\goodbreak\newpage

\subsection{Additional guidance results}
\subsubsection{Hyperparameters and Experimental settings}
In the SPARKE and Vendi Guidance experiment, we considered a Gaussian kernel, which consistently led to higher output scores in comparison to the other standard cosine similarity kernel. We used the same Gaussian kernel bandwidth $\sigma$ in the RKE and Vendi experiments, and the bandwidth parameter choice matches the selected value in \cite{jalali2024conditional, friedman2023vendi}. The numerical experiments were conducted on 4$\times$NVIDIA GeForce RTX 4090 GPUs, each of which has 22.5 GB of memory.

\subsubsection{Experimental Configuration for Table~\ref{tab:ditxl_vendi_rke_kd_fd}}

\textbf{Vendi Guidance.} We used a Gaussian kernel with bandwidth \(\sigma_{img} = 0.8\) and used $\eta=0.03$ as the weight of Vendi guidance. To balance the effects of the diversity guidance in sample generation, the Vendi guidance update was applied every 10 reverse-diffusion steps in the diffusion process, which is similar to the implementation of Vendi score guidance in \cite{hemmat2024cvsg}.

\textbf{SPARKE (RKE Guidance).} We considered the same Gaussian kernel for the image generation with bandwidth \(\sigma_{img} = 0.8\). The guidance hyperparameter was set to $\eta=0.03$, as in Vendi guidance. The SPARKE diversity guidance was applied every 10 reverse-diffusion steps.

\subsubsection{SPARKE and Vendi Guidance}
As discussed in the main text, one effective strategy to mitigate downward diversity bias is to apply regularization during training. We extend the analysis to a circular and grid pattern configurations. Our observations remain consistent: regularization significantly improves the mode coverage of the JS-GAN model. Furthermore, Table~\ref{tab:gaussian-mixture-vendi_rke_kd_fd} shows that this improvement extends to reference-based metrics, enhancing both quality and diversity scores.

\begin{figure}
    \centering
    \includegraphics[width=0.8\textwidth]{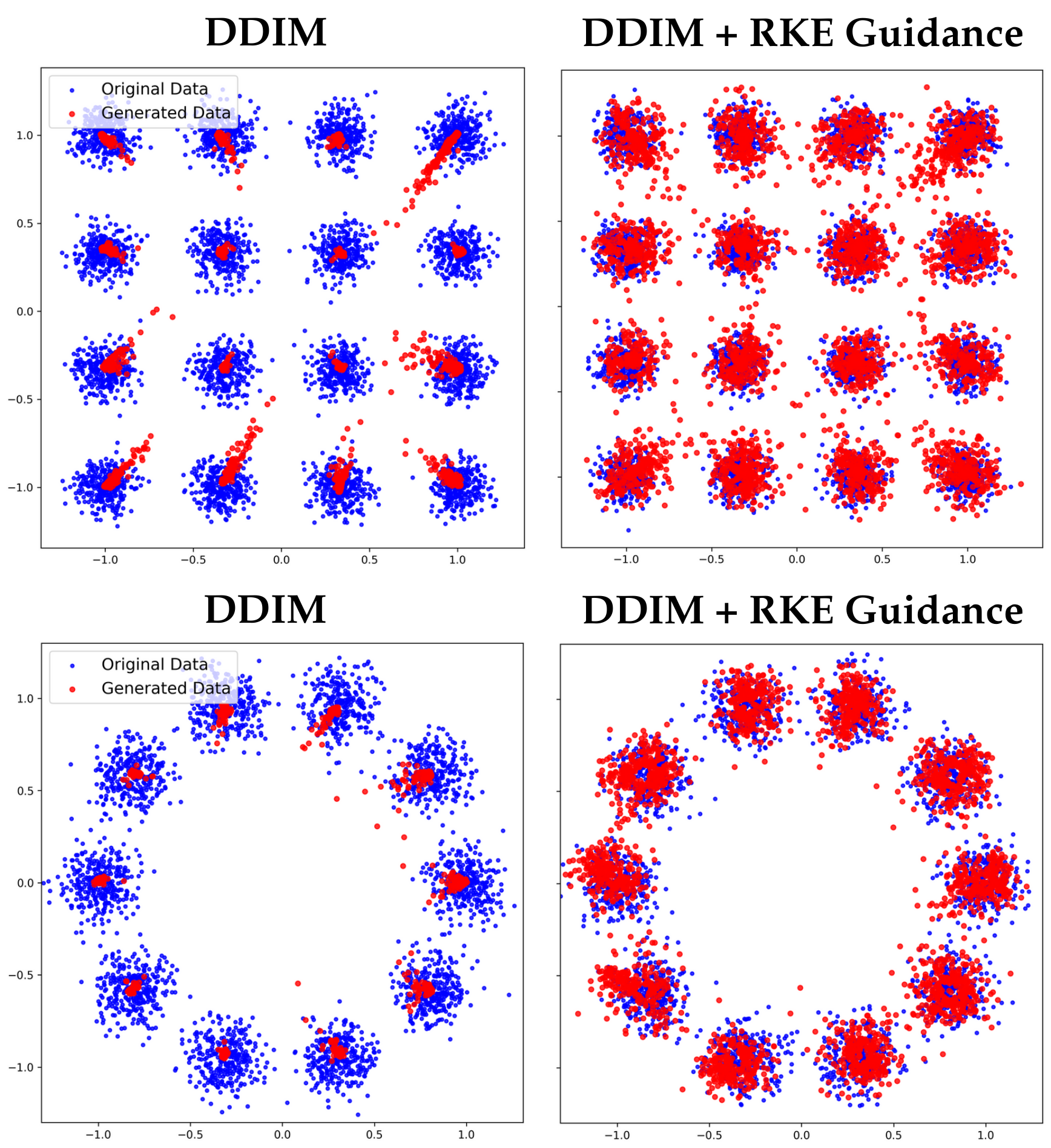}
    \caption{Comparison of DDIM with and without RKE regularization in two 2D GMM distributions: (1) modes are placed in a circular pattern, (2) modes are placed in a grid.}
    \label{fig:ddim-guidance-full}
\end{figure}

\begin{table}[ht]
\centering
\caption{Comparison of Vendi scores, RKE, KD, and FD across different sampling settings. on 2D Gaussian Mixtures with 10 components.}
\label{tab:gaussian-mixture-vendi_rke_kd_fd}
\begin{tabular}{lccccccccc}
\toprule
Method & Vendi & RKE  & KD $\times$ 10$^{2}$ & FD & Precision & Recall & Density & Coverage \\
\midrule
Dataset        & 16.49 & 12.86 & - & - & - & - & - & -\\
JS-GAN+RKE Reg.   & 15.20 & 12.18 &  0.098  & 0.081 & 0.98 & 0.89 & 1.02 & 0.78 \\
JS-GAN                       &  12.61 & 10.92 &  0.767 & 0.654  & 0.98  & 0.83  & 0.99 & 0.49 \\
\bottomrule
\end{tabular}
\end{table}

\begin{figure}
    \centering
    \includegraphics[width=0.8\textwidth]{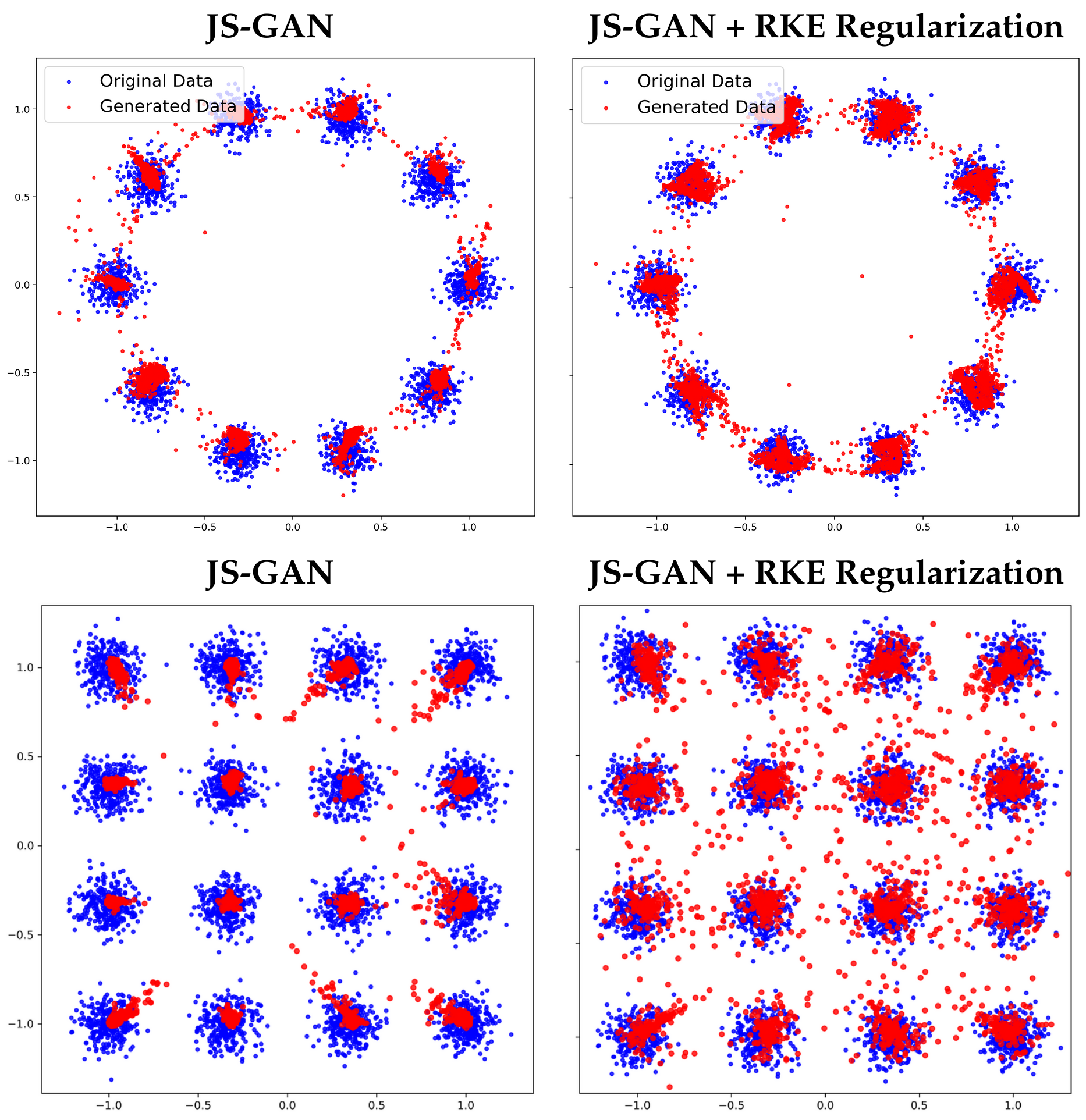}
    \caption{Comparison of JS-GAN generated GMM modes with and without diversity regularization during training. We show two mode placement examples: (1) modes are in a circular pattern, (2) modes are in the $4\times 4$ grid pattern.}
    \label{fig:js-gan-regularization}
\end{figure}

\begin{figure}
    \centering
    \includegraphics[width=0.8\textwidth]{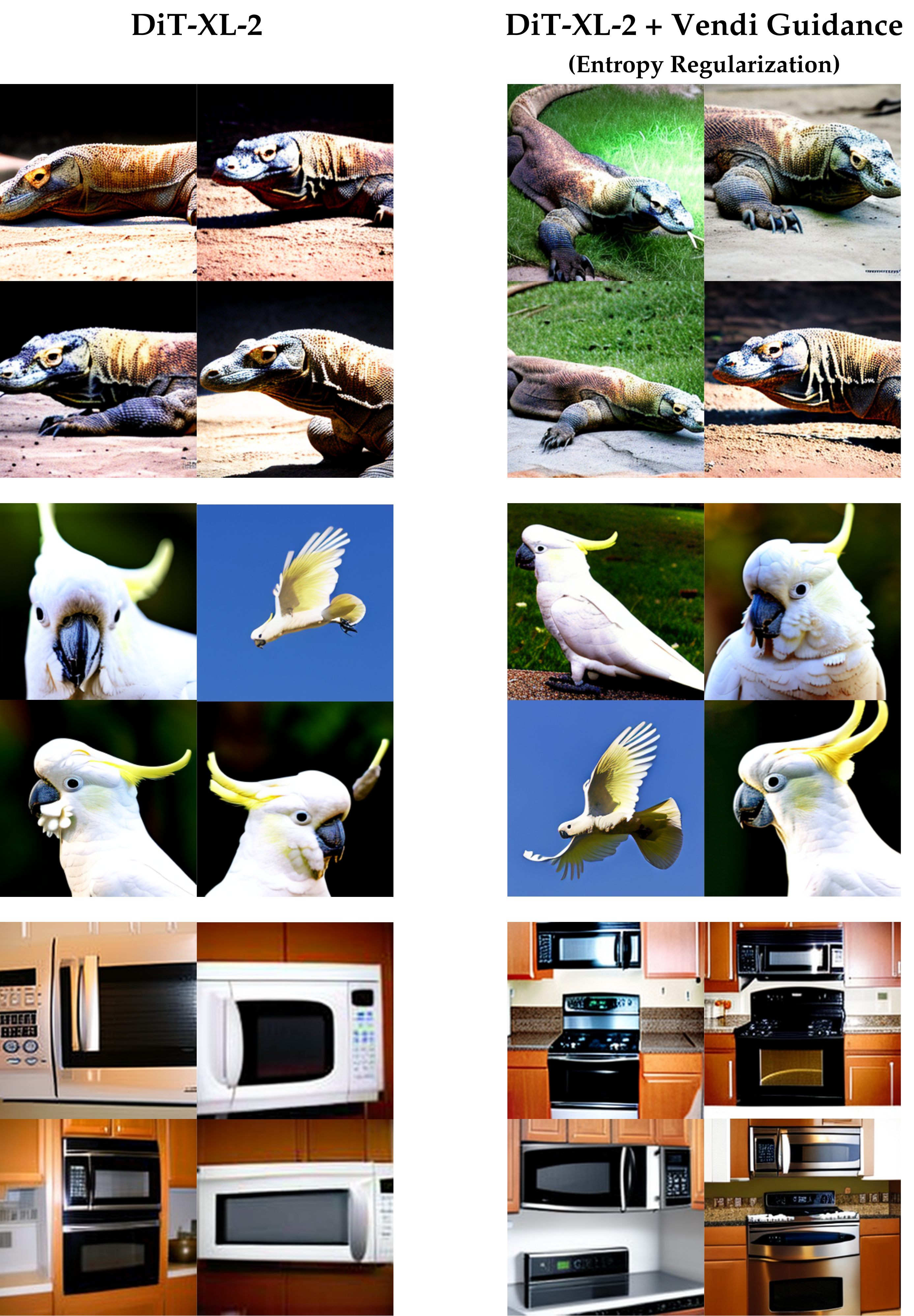}
    \caption{Qualitative Comparison of samples generated by DiT-XL-2 and DiT-XL-2 with Vendi guidance (entropy regularization).}
    \label{fig:ditxl-examples-guidance}
\end{figure}

Moreover, to mitigate the downward diversity bias, we demonstrate the efficacy of inference-time guidance on Diffusion models. By leveraging Vendi and RKE-based guidance, we actively encourage the model to explore underrepresented regions of the learned manifold. Table~\ref{tab:ditxl_vendi_rke_kd_fd} reports that applying Vendi or RKE guidance yields consistent improvements across Vendi, RKE, Recall, and Coverage scores. Crucially, this gain in diversity does not come at the cost of quality. Instead, we observe a concurrent improvement in sample quality—indicated by lower FD and KD, and higher Precision and Density—suggesting that the guidance effectively corrects.

\begin{table}[ht]
\centering
\caption{Comparison of Vendi scores, RKE, KD, and FD across different sampling settings. on DiT (trained on ImageNet) using DINOv2 feature embeddings. ($\sigma=35$)}
\label{tab:ditxl_vendi_rke_kd_fd}
\begin{tabular}{lccccccccc}
\toprule
Method & Vendi $\uparrow$ & RKE $\uparrow$ & KD $\downarrow$ & FD $\downarrow$ & Precision $\uparrow$ & Recall $\uparrow$ & Density$\uparrow$ & Coverage$\uparrow$ \\
\midrule
Dataset        & 1747.60 & 36.37 & - & - & - & - & - & -\\
Vendi guidance &  672.24 & 33.94 &  0.1628 & 328.16 & 0.95 & 0.15 & 1.63 & 0.73\\
RKE guidance   & 751.09  & 34.50  &  0.1629  & 319.56 & 0.96 & 0.16 & 1.54 & 0.74\\
No guidance    &  603.68 & 33.71 &  0.2174 & 422.15  & 0.94 & 0.11  & 1.50 & 0.64 \\
\bottomrule
\end{tabular}
\end{table}


 \end{appendices}

\end{document}